\newtheorem{theorem}{Theorem}
\newtheorem{assumption}{Assumption}
\renewcommand{\thesubfigure}{\alph{subfigure}}
\begin{document}
\title{Fast AI Model Partition for Split Learning  over Edge Networks}
\author{Zuguang~Li,~\IEEEmembership{Graduate Student Member,~IEEE},
Wen~Wu,~\IEEEmembership{Senior~Member,~IEEE},\\
Shaohua~Wu,~\IEEEmembership{Member,~IEEE},
and
Xuemin~(Sherman)~Shen,~\IEEEmembership{Fellow,~IEEE}
\thanks{
Zuguang Li and Wen Wu are with the School of Electronics and Information Engineering, Harbin Institute of Technology, Shenzhen 518055, China, and also with the Department of Strategic and Advanced Interdisciplinary Research, Pengcheng Laboratory, Shenzhen, 518055, China (e-mail: lizuguang@stu.hit.edu.cn, wuw02@pcl.ac.cn). 

Shaohua Wu is with the School of Electronics and Information Engineering, Harbin Institute of Technology, Shenzhen 518055, China (e-mail: hitwush@hit.edu.cn).


Xuemin (Sherman) Shen is with the Department of Electrical and Computer Engineering, University of Waterloo, Waterloo, ON N2L 3G1, Canada (email: sshen@uwaterloo.ca).
}
}
\maketitle
\IEEEpubidadjcol

\begin{abstract}
Split learning (SL) is a distributed learning paradigm that can enable computation-intensive artificial intelligence (AI) applications by partitioning AI models between mobile devices and edge servers. 
However, the model partitioning problem in SL becomes challenging due to the diverse and complex architectures of AI models. In this paper, we formulate an optimal model partitioning problem to minimize training delay in SL. To solve the problem, we represent an arbitrary AI model as a directed acyclic graph (DAG), where the model's layers and inter-layer connections are mapped to vertices and edges, and training delays are captured as edge weights. Then, we propose a general model partitioning algorithm by transforming the problem into a minimum \textit{s-t} cut problem on the DAG. Theoretical analysis shows that the two problems are equivalent, such that the optimal model partition can be obtained via a maximum-flow method.
Furthermore, taking AI models with block structures into consideration, we design a low-complexity block-wise model partitioning algorithm to determine the optimal model partition. Specifically, the algorithm simplifies the DAG by abstracting each block (i.e., a repeating component comprising multiple layers in an AI model) into a single vertex. Extensive experimental results on a hardware testbed equipped with NVIDIA Jetson devices demonstrate that the proposed solution can reduce algorithm running time by up to 13.0$\times$ and training delay by up to 38.95\%, compared to state-of-the-art baselines.

	\vspace*{1mm}

\begin{IEEEkeywords}
Split learning, fast model partition, block-wise, directed acyclic graph.
\end{IEEEkeywords}
\end{abstract}
\section{Introduction}
With the large-scale deployment of Internet of things (IoT) devices and the advancement of sensing technologies, massive amounts of data are continuously generated by mobile devices in edge networks. Leveraging such data, artificial intelligence (AI) can enable a wide range of intelligent applications in edge networks, including healthcare, autonomous driving, and smart cities~\cite{wang2021network}. Due to data privacy concerns, training AI models using traditional centralized learning is often difficult~\cite{Khaled2022edge}. Distributed learning frameworks, e.g., federated learning (FL), have been proposed to support collaborative model training without sharing raw data~\cite{shen2021holistic}. However, FL requires training the entire model on mobile devices, which is challenging, especially when their computing capabilities are limited.

Split learning (SL), as a promising distributed learning paradigm, has been proposed to address the weakness of FL~\cite{gupta2018distributed}. Specifically, SL divides the entire AI model into a device-side model and a server-side model. Each device trains only its device-side model using its own data and transmits intermediate results (i.e., smashed data and gradients) to the edge server instead of raw data~\cite{sun2024An}. In this way, SL can offload computational load from devices to the edge server while allowing devices to retain their raw data. In the SL framework, model partitioning is essential, particularly for edge networks, as it directly affects both communication overhead and device-side computational load.

{In the literature, several model partitioning methods have been proposed for SL to accelerate AI model training. In general, AI models can be classified into linear and non-linear architectures according to their layer structures~\cite{wang2022hivemind}. 
Linear AI models follow a strictly sequential structure, where each layer depends only on the output of the previous layer (e.g., LeNet~\cite{lecun1998gradient} and AlexNet~\cite{NIPS2012_c399862d}). 
In contrast, non-linear AI models contain complex connections such as skip connections or parallel branches (e.g., residual blocks in ResNet~\cite{targ2016resnet}), which create multiple dependency paths during computation.
For linear AI models, the number of available cuts is limited, and a brute-force search is a widely adopted approach to determine the optimal cut that minimizes model training latency~\cite{lin2024efficient, lim2024cutting}. For non-linear AI models, complex structures make brute-force search impractical for finding the optimal cut due to its prohibitive computational complexity. To determine the cut in non-linear AI models, a non-linear module was integrated into a single block, thereby transforming the non-linear model into a simplified linear structure for partitioning~\cite{wang2022hivemind}.
A regression-based method was designed to quantify the relationship between the cut and both the computing delay and communication delay, enabling the optimal cut to be obtained by solving an optimization problem~\cite{li2023Throughput}.
However, these approaches are generally suboptimal for non-linear AI models. Determining the optimal cut for arbitrary AI models faces three primary challenges: (\textit{i}) the AI models are highly complex, and accurately representing the relationship among layers is challenging; (\textit{ii}) during model training, device-side computation latency, server-side computation latency, and communication latency should be associated with the corresponding layers; and (\textit{iii}) an efficient method is required to quickly determine the optimal cut without incurring excessive computational latency.} Therefore, can we develop an efficient and generalizable approach to determine the optimal model partition for arbitrary AI models?


In this paper, we investigate the model partitioning problem for SL, aiming to quickly determine the optimal model partition that minimizes training delay in edge networks. To achieve this goal, \textit{firstly}, we represent an AI model as a directed acyclic graph (DAG), where the AI model’s layers and inter-layer connections are mapped to vertices and edges, respectively. To encode the computation and transmission delays in the DAG, we assign three types of edge weights: \textit{(i) device execution weight}, defined as the sum of the computation delay of training the corresponding layer on the device and the transmission delay for uploading that layer’s parameters; \textit{(ii) server execution weight}, defined as the sum of the computation delay of training the corresponding layer on the server and the transmission delay for downloading that layer’s parameters; and \textit{(iii) propagation weight}, defined as the transmission delay of the smashed data and the corresponding gradients propagated between two connected layers.
{\textit{Secondly}, for an arbitrary AI model, we propose a general model partitioning algorithm to determine the optimal cut. Specifically, the optimal model partitioning problem is reformulated as a minimum \textit{s–t} cut problem on the DAG. We further prove that these two problems are equivalent via rigorous theoretical analysis, meaning that solving the minimum \textit{s–t} cut directly yields the optimal model partition. Based on this equivalence, the optimal cut can be efficiently obtained using classical maximum-flow algorithms. \textit{Thirdly}, for an AI model with block structures, we propose a low-complexity block-wise model partitioning algorithm to determine the optimal cut. Specifically, the algorithm simplifies the DAG by abstracting each recurrent block (i.e., a module consisting of multiple layers) into a single vertex. In this way, the computational complexity of searching for the optimal cut can be further reduced.}
Our experimental results on a hardware testbed with NVIDIA Jetson devices demonstrate that the proposed solution can: {\textit{(i)} determine the optimal model partition for SL within milliseconds, where the block-wise model partitioning algorithm achieves up to a $13\times$ reduction in running time compared with the general model partitioning algorithm; and \textit{(ii)} accelerate model training by reducing the overall training delay by up to 38.95\% compared with the regression-based method.}
The main contributions of this paper can be summarized as follows:
\begin{itemize}
    \item {We represent an arbitrary AI model as a general DAG and formulate an optimal model partitioning problem to minimize the training delay in SL;}

    \item {We propose a general model partitioning algorithm to determine the optimal model partition, which equivalently transforms the optimal model partitioning problem into a minimum \textit{s-t} cut problem;}
    
    \item {We propose a block-wise model partitioning algorithm for AI models, which can quickly identify the optimal model partition.}

\end{itemize}

The remainder of this paper is organized as follows. Section~\ref{sec: related works} reviews the related works, followed by the system model in Section~\ref{sec: system model}. The DAG-based representation of AI models is presented in Section~\ref{sec: DAG-Based AI Model Depiction}. The general and block-wise model partitioning algorithms are presented in Sections~\ref{sec: DAG-based model partition Algorithm} and \ref{sec: block-wise model partition Algorithm}, respectively. Performance evaluation is provided in Section~\ref{sec: simulation results}. Finally, Section~\ref{sec: conclusion} concludes this paper.

\section{Related Work} \label{sec: related works}


{The SL paradigm has attracted widespread attention and investigation due to its ability to enable flexible computing load sharing across devices and edge servers for collaborative training.} In the 3rd generation partnership project (3GPP) Release 18, the performance advantages of AI model partition have been recognized, and protocol-level primitives have been explored to support model partition between end devices and the server~\cite{3gpp2021study}.
A communication-aware SL framework with an early-exit mechanism was developed to accommodate devices with diverse computational capabilities and fluctuating channel conditions~\cite{ninkovic2024comsplit}. An online model partition strategy was proposed to minimize energy and latency costs under dynamic wireless channel conditions~\cite{yan2022optimal}.
To support dynamic networks, a cluster-based outgoing SL framework was introduced, enabling a first-outgoing-then-sequential training paradigm for heterogeneous devices~\cite{wu2023split}.
Similarly, a parallel SL framework was proposed where all devices collaboratively train the device-side model with the server, addressing computational heterogeneity in edge networks~\cite{lin2024efficient}.
To further improve communication efficiency, a split federated learning framework was designed, where cut selection and bandwidth allocation are jointly optimized to minimize overall system latency~\cite{xu2023accelerating}. Existing work has explored SL to address challenges such as heterogeneous computing capabilities and dynamic channel conditions in edge networks. {In contrast, our work focuses on fast AI model partition for SL to reduce the computational complexity and the overall training delay.}



The selection of the cut layer in SL directly affects computational load and communication overload. A theoretical analysis of split federated learning was conducted to examine how the cut layer selection influences convergence and overall performance~\cite{dachille2024impact}. Another study modeled the cut layer selection process as a Stackelberg game to optimize client incentives and data contributions while balancing privacy preservation and energy consumption~\cite{lee2024game}.
To determine the optimal cut layer, various approaches have been proposed, including brute-force search~\cite{lin2024efficient}, binary search~\cite{duan2021joint}, regression-based search~\cite{wen2025training}, and graph-based search approaches~\cite{liang2023dnn, dai2024asurvey, wu2024pdd}. A brute-force search method was proposed to identify the optimal cut layer for either minimum latency or mobile energy consumption, based on profiling per-layer execution time and energy usage across different AI models~\cite{kang2017neurosurgeon}. A binary search-based partition algorithm was designed for pipeline inference in line-structured models~\cite{duan2021joint}. In graph-based approaches, a DAG-based method was proposed to select the optimal cut layer under dynamic network conditions, thereby reducing inference latency in edge–cloud systems~\cite{liang2023dnn}. Another approach addressed multi-partitioning in DAG-topology networks through graph transformation and partitioning techniques~\cite{wu2024pdd}. The existing model partition approaches are primarily designed for linear neural networks, limiting their applicability to more complex non-linear neural networks. Differently, our work focuses on finding the optimal model partition for arbitrary AI models.

\section{System Model} \label{sec: system model}
\subsection{Considered Scenario}
In this subsection, we present a typical SL scenario in an edge network and the model training process. As shown in Fig.~\ref{fig:system_model}, the system includes mobile devices and an edge server. The AI model is partitioned between the device and the server, with intermediate data transmitted via wireless channels. Key system components are described as follows:

\begin{itemize}
    \item  {\emph{Edge server:} The edge server is equipped with a base station that can perform server-side model training. Moreover, it is responsible for collecting network information, such as device computing capabilities and channel conditions, to support decision-making for the model partition.}
    
    \item {\emph{Mobile Devices:} The devices are endowed with computing capabilities, which can perform device-side model training. However, their computing capabilities are significantly weaker than the edge server’s, which motivates offloading part of the training workload to the server to accelerate the overall training process. They have local datasets and corresponding labels for model training.}
\end{itemize}

\begin{figure}[t]
	\renewcommand{\figurename}{Fig.}
	\centering
	\includegraphics[width=0.40\textwidth]{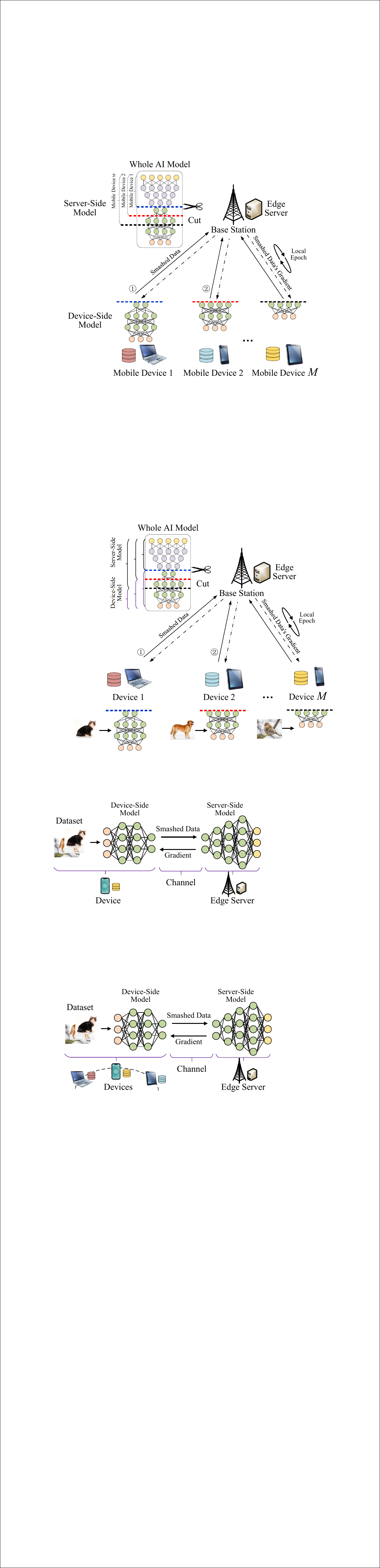}
	\caption{{Considered wireless SL scenario in edge networks.}}
	\label{fig:system_model}
\end{figure}

Considering the heterogeneous device capabilities and the dynamic channel conditions in edge networks, the server dynamically determines the optimal model partition for each device at each training epoch.
Before training begins, the model parameters are randomly initialized. In each training epoch, the server collects device and network information and selects a device for training in a round-robin manner. Based on the information, the server determines the model partition, denoted by ${c} = \{\mathcal{V}_D, \mathcal{V}_S\}$, where $\mathcal{V}_D$ and $\mathcal{V}_S$ denote the sets of layers assigned to the device and server, respectively. {The device and server then jointly train the AI model over $N_{loc}$ local iterations.}

{During each local iteration, the device performs forward propagation on the device-side model and transmits the smashed data and corresponding labels to the server. The server completes forward and backward propagation on the server-side model and returns the gradients to the device, which subsequently updates its device-side model. 
After completing local iterations, the device uploads its updated device-side model to the server, which integrates the updates and distributes the updated device-side model to the next selected device. This process is repeated across devices over multiple training epochs until the global model converges.}

\subsection{Training Delay}
In SL, the computation and transmission delays are analyzed as follows.

\subsubsection{Computation Delay}
{During each training epoch, the device-side model and the server-side model are trained at the device and server, respectively, whose delays are analyzed as follows:}
\begin{itemize}
    \item {\textit{Device-side model computation delay:} The computation delay in processing layer $v_i$ is defined as $\xi_{D, v_i}$. Note that $\xi_{D, v_i}$ consists of the forward and backward propagation computation delay at layer $v_i$. The device-side model computation delay can be defined as}
\begin{equation}\label{equ:Device-side model computation delay}
		T_{D,C} = \sum_{v_i \in \mathcal{V}_D} \xi_{D, v_i}.
\end{equation}

\item {\textit{Server-side model computation delay:} For the server, the computation delay in processing layer $v_i$ is defined as $\xi_{S, v_i}$. Hence, its computation delay can be defined as}
\begin{equation}  \label{server_comp_delay}
	T_{S,C} = \sum_{v_i \in \mathcal{V}_S } \xi_{S, v_i}.
\end{equation}
\end{itemize}

\subsubsection{Transmission Delay}
{During SL, the device-side model, labels, smashed data, and its gradients are exchanged between the device and server, incurring transmission delay. Since the transmission delay of labels is independent of the cut layer selection, it can be ignored.} The analysis of other transmission delays is as follows.

\begin{itemize}
    \item {\textit{Device-side model downloading delay:} Let $R_{S}$ denote the transmission rate from the server to the device, and $k_{v_i}$ denote the data size of the parameters in layer $v_i$.
Hence, the downloading delay for transmitting the device-side model can be defined as}
\begin{equation}
	T_{S,D}=\frac{\sum_{v_i \in \mathcal{V}_D} k_{v_i} }{R_{S}}.
\end{equation}

    \item {\textit{Smashed data and transmission delay:}  Let $R_{D}$ denote the average transmission rate from the device to the server, and $a_{v_i}$ denote the data size of the smashed data of vertex $v_i$ in the forward propagation.  The transmission delay for transmitting the smashed data can be defined as}
    \begin{equation}
        T_{D,S}=\frac{\sum_{v_i \in \mathcal{V}_c } a_{v_i}}{R_{D}},
    \end{equation}
    where $\mathcal{V}_c$ is the set of end layers on the device-side model after the cut splits the AI model.
    

    \item \textit{Smashed data's gradient transmission delay:} Let $\tilde{a}_{v_i}$ denote the size of the smashed data's gradient received by vertex $v_i$ during backward propagation. During AI model training, the size of smashed data is equal to that of its corresponding gradient, i.e., $\tilde{a}_{v_i} = {a}_{v_i}$.
    The transmission delay for transmitting the gradient can be defined as
    \begin{equation}
    	T_{S,G}=\frac{\sum_{v_i \in \mathcal{V}_c } \tilde{a}_{v_i}}{R_{S}}.
    \end{equation}

    \item \textit{Device-side model uploading delay:} The device-side model uploading delay can be defined as
    \begin{equation} \label{equ: device-side model upload delay}
            T_{D,U}=\frac{\sum_{v_i \in \mathcal{V}_D} k_{v_i} }{R_{D}}.
    \end{equation}
    
    \end{itemize}

\subsubsection{Overall Model Training Delay}
Taking all the computation and transmission delay components into account, the model training delay is given by
\begin{equation}\label{equ:model training delay}
	\begin{split}
		T(c) =  N_{loc} \left( T_{D,C} + T_{D,S} +  T_{D,B} +  T_{S,C} + T_{S,G} \right) \\
        + T_{D,U}
         + T_{S,D}.
	\end{split}
\end{equation}

Our goal is to find the optimal model partition that minimizes the overall training delay, i.e., ${\text{min}}~T(c)$. However, solving this optimization problem is challenging, especially when the structure of an AI model is complex and non-linear. 
{Linear AI models typically follow a sequential structure in which layers are executed one after another, e.g., LeNet and AlexNet. For linear AI models, the number of available cuts is limited, and the optimal cut can be efficiently obtained through a simple brute-force search. In contrast, non-linear AI models contain complex computational dependencies such as residual connections or parallel branches, where multiple paths may exist between layers (e.g., residual blocks in ResNet). These complex dependencies make it difficult to directly determine the optimal cut due to the large number of possible cuts and intricate data dependencies. 
}

\section{DAG-Based Representation of AI Models} \label{sec: DAG-Based AI Model Depiction}
In this section, we represent an AI model as a DAG, with edge weights encoding the computation and transmission delays. Based on this DAG, we formulate the optimal model partition problem as a minimum \textit{s-t} cut problem.

\subsection{DAG Model} \label{subsec: DAG Design}
{To find the optimal model partition, a unified and structured representation of AI models is required. Since an AI model inherently consists of layers with computational and data dependencies, it can naturally be abstracted as a DAG, where vertices represent layers and edges capture their dependencies. The acyclic property ensures that the execution order strictly follows the intrinsic computational logic of the AI model. Moreover, this representation enables us to explicitly encode computation and communication delays and facilitates the use of efficient graph-based optimization techniques, such as minimum \textit{s-t} cut algorithms, for optimal model partitioning. The construction of the DAG is described as follows.}

\textit{1) DAG Construction:}
As illustrated in Fig.~\ref{fig: AI model and DAG model}(a), a typical AI model consists of various types of layers, such as convolutional layers, pooling layers, and fully connected layers. These layers are connected via data dependencies, forming a computational graph. This graph can be represented as a DAG $\mathcal{G}_{A} = (\mathcal{V}_A, \mathcal{E}_A)$, where $\mathcal{V}_A = \{v_1, v_2, \dots, v_L\}$ denotes the set of layer vertices. {Each vertex $v_i$ corresponds to a specific layer and $L$ is the total number of layers in the AI model.} Let $\mathcal{E}_A$ be the set of directed edges, and each edge $(v_i, v_j) \in \mathcal{E}_A$ indicates that the output of layer $v_i$ is used as the input to layer $v_j$ in the AI model. Therefore, the AI model structure can be represented as a DAG.

To represent the computation flow from device to server, we introduce two virtual vertices: $v_D$, \textit{source vertex}, denoting a virtual device, and $v_S$, \textit{sink vertex}, denoting a virtual server. In addition, we add edges from the source vertex $v_D$ to each layer vertex $v_i \in \mathcal{V}$, and from each layer vertex $v_i \in \mathcal{V}$ to the sink vertex $v_S$, so that the entire AI model is enclosed between the source and sink vertices, as shown in Fig.~\ref{fig: AI model and DAG model}(b). The DAG can be defined as
\begin{equation}  \label{eq: DAG}
    \mathcal{G} = (\mathcal{V}, \mathcal{E}),
\end{equation}
{where $\mathcal{V} = \mathcal{V}_A \cup \{v_D, v_S\}$ and $\mathcal{E} = \mathcal{E}_A \cup \{ (v_D, v_i )\}_{v_i \in \mathcal{V}_A} \cup \{ (v_i, v_S)\}_{v_i \in \mathcal{V}_A}$. Note that $\{ (v_D, v_i )\}_{v_i \in \mathcal{V}_A}$ denotes the set of all directed edges from $v_D$ to each vertex $v_i \in \mathcal{V}_A$, and similarly, $\{ (v_i, v_S)\}_{v_i \in \mathcal{V}_A}$ denotes the set of all directed edges from $v_i \in \mathcal{V}_A$ to the sink $v_S$.}

\begin{figure}[t] 
    \renewcommand{\figurename}{Fig.}
    \centering
    \includegraphics[width=0.48\textwidth]{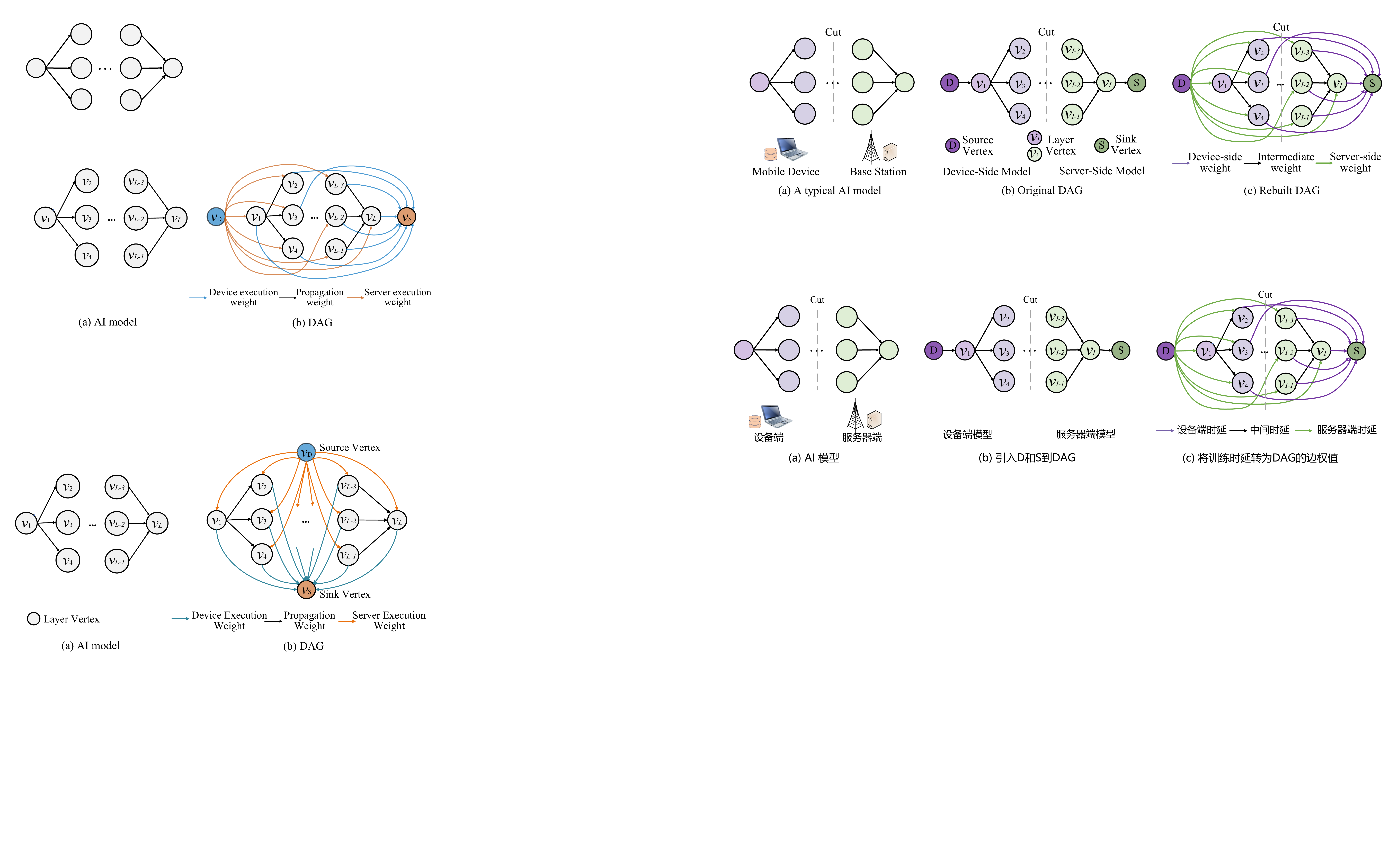}
    \caption{{Illustration of representing an AI model as a DAG. (a) The AI model consists of multiple layers. (b) The corresponding DAG adds a source and a sink vertex.}}
    \label{fig: AI model and DAG model}
\end{figure}

\begin{algorithm}[t] 
    \caption{DAG Building Process.}
    \label{ag: DAG Building Process}
    
    {\KwIn{$\mathcal{G}_{A} = (\mathcal{V}_A, \mathcal{E}_A)$\;}}
    {\KwOut{$\mathcal{G}$\;}}
    
    $\mathcal{V} = \mathcal{V}_A$, $\mathcal{E} = \mathcal{E}_A$\;
    \For{each $v_i \in \mathcal{V}$}{
        Add edges $(v_D, v_i)$ and $(v_i, v_S)$ to $\mathcal{E}$\;
    }

    \For{each $(v_i, v_j) \in \mathcal{E})$}{
        \uIf{$v_i == v_D$}{
            Calculate $w_{(v_i, v_j)}$ based on Eq.~(\ref{equ: device execution weight})\;
        }
        \uElseIf{$v_j == v_S$}{
            Calculate $w_{(v_i, v_j)}$ based on Eq.~(\ref{equ: server execution weight})\;
        }
        \Else{
            Calculate $w_{(v_i, v_j)}$ based on Eq.~(\ref{equ: propagation weight})\;
        }
        Assign $w_{(v_i, v_j)}$ to edge $(v_i, v_j)$\;
    }
    $\mathcal{G} = (\mathcal{V} \cup \{v_D, v_S\}, \mathcal{E})$.
\end{algorithm}
\textit{2) Mapping Delay into DAG:}
The main challenge in using DAG to solve the optimal model partitioning problem is that it cannot directly represent computation delay, device-side model uploading delay, or device-side model downloading delay via vertex or edge weights. This is because the computation delay of each layer depends on whether it is executed on the device or the server. In contrast, the uploading and downloading delays depend on the specific subset of layers trained on the device. To address this, the model training delays (including the computation delay and transmission delay) are mapped into three types of edge weights in the DAG as follows:

\begin{itemize}
    \item \textit{Device execution weight}: {The weight of the directed edge from each layer vertex $v_i \in \mathcal{V}$ to the sink vertex $v_S$ represents the device execution weight when $v_i$ is executed on the device. This weight is defined as the sum of the computation delay for training $v_i$ on the device and the transmission delay for uploading $v_i$ during the device-side model update.} It is calculated as
    \begin{equation}\label{equ: device execution weight}
       {w_{(v_i, v_S)} = N_{loc} \xi_{D, v_i} + \frac{k_{v_i}}{R_D}.}
    \end{equation}

    \item \textit{Server execution weight}: {The weight of the directed edge from source vertex $v_D$ to each layer vertex $v_i \in \mathcal{V}$ represents the server execution weight when $v_i$ is executed on the server. This weight is defined as the sum of the computation delay for training $v_i$ on the server and the transmission delay for downloading $v_i$ during the device-side model distribution.} It is calculated as
    \begin{equation}\label{equ: server execution weight}
        {w_{(v_D,v_i)}  = N_{loc} \xi_{S, v_i} + \frac{k_{v_i}}{R_S}.}
    \end{equation}

    \item \textit{Propagation weight:} The weight of the directed edge between any two layer vertices $v_i, v_j \in \mathcal{V}$ represents the propagation weight. This weight is defined as the sum of the transmission delay of $v_i$’s smashed data during forward propagation and $v_j$’s gradient during backward propagation. It is calculated as
    \begin{equation}\label{equ: propagation weight}
     \begin{split}
       w_{(v_i,v_j)} = N_{loc} \left(\frac{a_{v_i}}{R_D} + \frac{\tilde{a}_{v_i}}{R_S}\right).
     \end{split}
    \end{equation}
    Here, $\tilde{a}_{v_i}$ denotes the size of the gradient received by vertex $v_i$ from $v_j$ during backward propagation, which equals the size of the smashed data in forward propagation, i.e., $\tilde{a}_{v_i} = a_{v_i}$.
\end{itemize}

Through the above process, the computation and transmission delays are encoded as the weights of the edges in DAG $\mathcal{G}$. As shown in Fig.~\ref{fig: AI model and DAG model}(b), the edges with device execution, server execution, and propagation weights are colored blue, orange, and black, respectively.
The detailed procedure for constructing the DAG from an AI model is presented in Alg.~\ref{ag: DAG Building Process}. {Note that in the DAG, vertices are unweighted and represent only the layers of the AI model, as well as the virtual device and the virtual server.}

\subsection{Problem Formulation}
We aim to minimize the overall training delay in SL, which can be reformulated as a minimum \textit{s-t} cut problem in the DAG, i.e., finding an \textit{s-t} cut that minimizes the total weight of the edges intersected by the cut.  An \textit{s-t} cut partitions the vertex set into two disjoint subsets $\mathcal{V}_D$ and $\mathcal{V}_S$, such that the source vertex $v_D \in \mathcal{V}_D$ and the sink vertex $v_S \in \mathcal{V}_S$. The value of an \textit{s-t} cut ${c} = \{\mathcal{V}_D, \mathcal{V}_S\}$ is defined as the sum of the weights of all edges directed from  $\mathcal{V}_D$ to $\mathcal{V}_S$. Let $\mathcal{W}({c})$ denote the set of weights of these edges, then the value of the cut can be expressed as $T_{\mathcal{G}}({c}) = \sum_{w \in \mathcal{W}({c})} w$. Thus, the minimum \textit{s-t} cut problem in the DAG can be formulated as follows:
\begin{equation}	\label{eq: optimal problem in DAG}
   \begin{split}
    \underset{{c}}{\text{min}}~ &  T_{\mathcal{G}}({c})  \\
     \text{s.t.} ~~& c \in \mathcal{C},   \\
      ~~& v_D \in \mathcal{V}_D,   \\
     ~~& v_S \in \mathcal{V}_S,    \\ 
     ~~& v_i \in \mathcal{V}_S \Rightarrow v_j \in \mathcal{V}_S, ~ \forall (v_i,v_j) \in \mathcal{E}.
    \end{split}
\end{equation}
Here,  $\mathcal{C}$ is a set of all available cuts in the DAG, and the first constraint guarantees the feasibility of the cut selection. The second and third constraints ensure that the source vertex is placed on the device, while the sink vertex is placed on the server, respectively. The last constraint ensures that no vertex in $\mathcal{V}_S$ is the parent of any vertex in $\mathcal{V}_D$. This means that it is not possible to obtain an optimal cut, as device-side computations must wait for server-side computations to finish during forward propagation. 

{This problem cannot be directly solved using standard minimum \textit{s-t} cut methods. In the DAG, if the cut intersects multiple outgoing edges from the same parent vertex, the corresponding propagation weight may be counted multiple times, leading to an overestimation of the cut value. However, in the AI model, the smashed data and its corresponding gradient from a parent vertex only need to be transmitted once, regardless of how many outgoing edges it has.} To address this issue, we introduce a general model partitioning algorithm in the following section.


\section{General Model Partitioning Algorithm} \label{sec: DAG-based model partition Algorithm}
\begin{figure}[t] 
	\renewcommand{\figurename}{Fig.}
	\centering
	\includegraphics[width=0.35\textwidth]{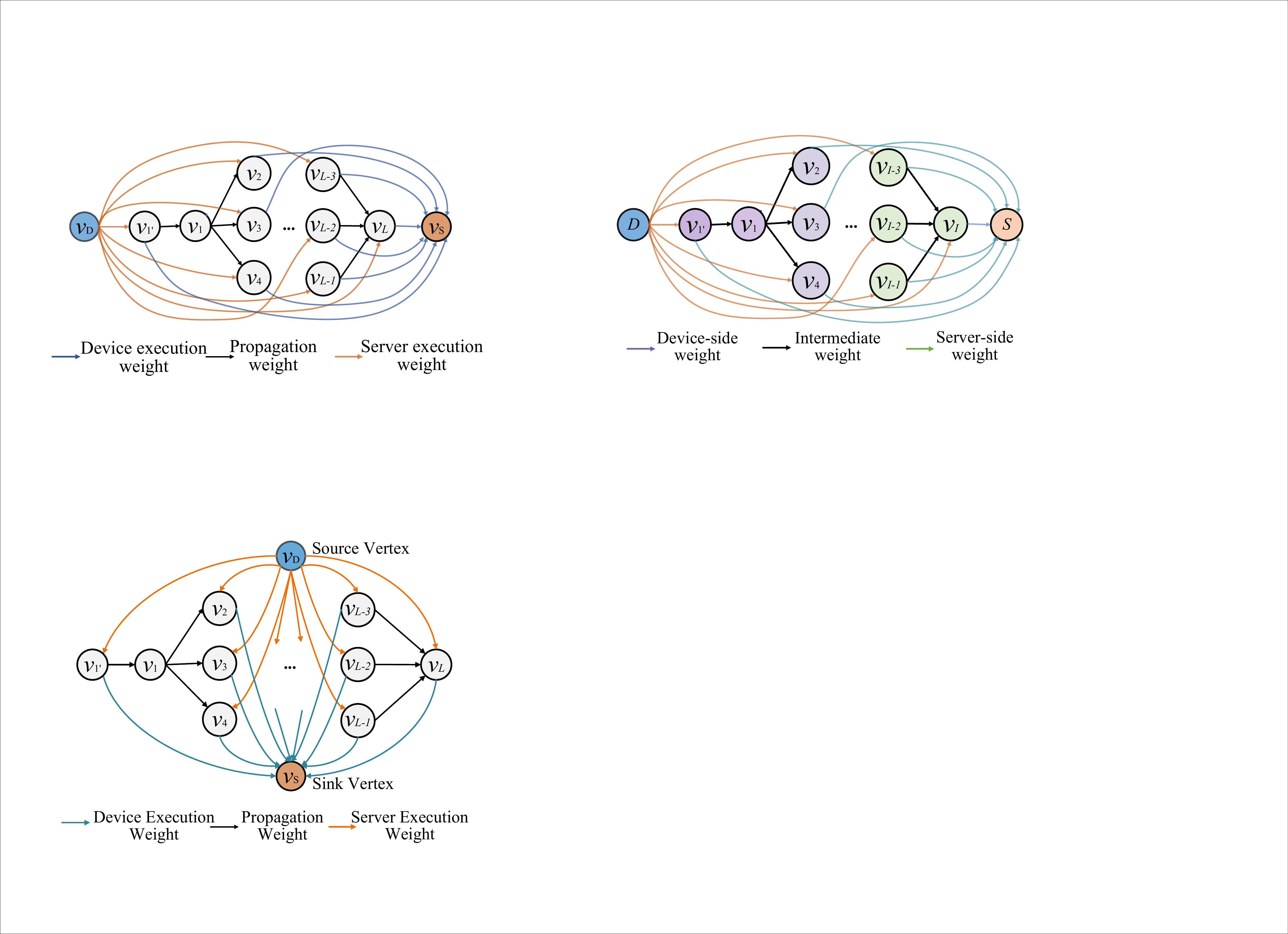}
	\caption{{Illustration of adding auxiliary vertices (e.g., $v_{1'}$) for parent vertices that have multiple child vertices.}}
	\label{fig: adding the auxiliary vertex}
\end{figure}
In this section, we propose a general model partitioning algorithm to determine the optimal model partition for arbitrary AI model architectures. We also prove that the cut obtained by this algorithm is optimal.

\subsection{Algorithm Design}
{To eliminate the overestimation of propagation weights in the DAG, we propose a general model partitioning algorithm.} This algorithm restructures the DAG to prevent duplicate transmission-weight counts from the same parent vertex. The main steps are described as follows.

\begin{enumerate}
\item Initialize a DAG $\mathcal{G}' = (\mathcal{V}' \cup \{v_D, v_S\}, \mathcal{E}')$ by setting $\mathcal{V}' = \mathcal{V}$ and $\mathcal{E}' = \mathcal{E}$. Then, identify the set of parent vertices that have multiple child vertices in $\mathcal{G}'$, denoted as $\mathcal{V_P}$.

\item For each parent vertex $v_p \in \mathcal{V_P}$, an auxiliary vertex $v_{p'}$ is introduced into the DAG, as illustrated in Fig.~\ref{fig: adding the auxiliary vertex}.

\item All incoming edges originally connected to $v_p$ are redirected to the new vertex $v_{p'}$, while preserving their original weights:
\begin{equation} \label{equ: the edges aggregated to the parent vertex}
    w_{(v_{i}, v_{p'})} = w_{(v_i,v_p)},
\end{equation}
{where $v_i$ is a parent vertex of $v_p$. For example, in Fig.~\ref{fig: adding the auxiliary vertex}, edge $(v_D, v_1)$ becomes $(v_D, v_{1'})$, satisfying $w_{(v_D, v_{1'})} = w_{(v_D, v_1)}$.}

\item The edge from $v_p$ to the sink vertex $v_S$ is also redirected to originate from $v_{p'}$, with its weight unchanged:
\begin{equation}\label{equ: auxiliary vertex weight with other vertex}
    w_{(v_{p'}, v_S)} = w_{(v_p, v_S)}.
\end{equation}
As shown in Fig.~\ref{fig: adding the auxiliary vertex}, edge $(v_1, v_S)$ becomes $(v_{1'}, v_S)$, and we have $w_{(v_{1'}, v_S)} = w_{(v_1, v_S)}$.

\item Add a new edge from the auxiliary vertex to the original parent vertex, i.e., $(v_{p'}, v_p)$, with a weight equal to the propagation cost from $v_p$ to one of its child vertices:
\begin{equation}\label{equ: auxiliary vertex weight with source vertex}
    w_{(v_{p'},v_{p})} = w_{(v_p,v_j)},
\end{equation}
where $v_j$ is a child of $v_p$. For example, in Fig.~\ref{fig: adding the auxiliary vertex}, edge $(v_{1'}, v_1)$ is added with $w_{(v_{1'},v_1)} = w_{(v_1,v_2)}$.
\end{enumerate}

After the above process, we build a new DAG $\mathcal{G}'$ for non-linear neural networks. This new DAG resolves the issue of counting the same propagation weight multiple times when a cut intersects multiple edges from the same parent vertex. The optimal cut can then be obtained by solving the minimum \textit{s-t} cut problem on $\mathcal{G}'$, which can be efficiently solved using standard maximum-flow algorithms, such as Dinic’s algorithm~\cite{Dinitz2006}. For linear neural networks, this issue does not exist, as each layer has at most one child. Thus, the optimal cut can be directly computed on the original DAG $\mathcal{G}$ using simple algorithms such as the brute-force search method. The detailed procedure of the general model partitioning algorithm is presented in Alg.~\ref{ag: general model partitioning algorithm}.

\begin{algorithm}[t] 
    \caption{General Model Partition Algorithm.}
    \label{ag: general model partitioning algorithm}
    {\KwIn{$\mathcal{G} = (\mathcal{V} \cup \{v_D, v_S\}, \mathcal{E})$\;}}
    {\KwOut{$c^{\ast}$\;}}
    Find out the set $\mathcal{V_P}$ of parent vertices with several child vertices\;
    \uIf{$\mathcal{V_P} = \emptyset$}{
    Obtain minimum \textit{s-t} cut $c_{\mathcal{G}}^{\ast}$ of $\mathcal{G}$ via the brute-force search\;
    $c^{\ast} = c_{\mathcal{G}}^{\ast}$\;
    }
    \Else{
    $\mathcal{E}' \gets \mathcal{E}$, $\mathcal{V}' \gets \mathcal{V}$\;
    \For{$v_p$ in $\mathcal{V_P}$}{
        Add auxiliary vertex $v_{p'}$ to $\mathcal{V}'$\;
        The edges pointing to parent vertex $v_p$ are migrated from $v_p$ to $v_{p'}$\;
        Calculate $w_{(v_{i}, v_{p'})}$ based on Eq.~(\ref{equ: the edges aggregated to the parent vertex})\;
        Assign $w_{(v_{i}, v_{p'})}$ to edge $(v_{i}, v_{p'})$ in $\mathcal{G}'$\;
        The edge between $v_p$ and $v_S$ are migrated from $v_p$ to $v_{p'}$\;
        Calculate $w_{(v_{p'}, v_S)}$ based on Eq.~(\ref{equ: auxiliary vertex weight with other vertex})\;
        Assign $w_{(v_{p'}, v_S)}$ to edge $(v_{p'}, v_S)$ in $\mathcal{G}'$\;
        Add an edge $(v_{p'}, v_p)$ to $\mathcal{E}'$\;
        Calculate $w_{(v_{p'},v_{p}})$ based on Eq.~(\ref{equ: auxiliary vertex weight with source vertex})\;
        Assign $w_{(v_{p'},v_{p}})$ to edge $(v_{p'},v_{p})$ in $\mathcal{G}'$\;
    }
    $\mathcal{G}' = (\mathcal{V}' \cup \{v_D, v_S\}, \mathcal{E}')$\;
    Obtain minimum \textit{s-t} cut $c_{\mathcal{G}'}^{\ast}$ of $\mathcal{G}'$ via the maximum flow method\;
    $c^{\ast} = c_{\mathcal{G}'}^{\ast}$.}
\end{algorithm}

\subsection{Performance Analysis}
{For linear neural networks, it is straightforward to see that the minimum \textit{s-t} cut in the DAG is precisely equivalent to the optimal model partition in the AI model. Therefore, in this subsection, we focus on proving the equivalence between the optimal model partition in non-linear neural networks and the minimum \textit{s-t} cut in the DAG $\mathcal{G}'$.}
To prove this equivalence, it begins with the following reasonable assumption.
\begin{assumption} \label{assumption 1}
    The server's computing capability is at least as strong as the device's. Specifically, for any layer, the computation latency on the server is no greater than that on the device, i.e.,
     \begin{equation}\label{equ: computing capability of assumption}
  \begin{split}
    \xi_{D, v_i} - \xi_{S, v_i} \ge 0, ~\forall v_i \in \mathcal{V}.
  \end{split}
 \end{equation}
\end{assumption}
Under Assumption~\ref{assumption 1}, we can rigorously prove that the optimal model partition in the AI model is equivalent to finding the minimum \textit{s-t} cut in the corresponding DAG. This relationship is formalized in the following theorem.
\begin{theorem} {The minimum \textit{s-t} cut in the DAG $\mathcal{G}'$ corresponds exactly to the optimal model partition in the AI model.}
\label{theorem: min-cut is equivalent to min delay}
\end{theorem}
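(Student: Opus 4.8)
The plan is to establish the equivalence by proving that the value $T_{\mathcal{G}'}(c)$ of every $s$-$t$ cut of the restructured DAG $\mathcal{G}'$ coincides with the training delay $T(c)$ of the associated model splitting, up to an additive quantity that is independent of the cut; since a cut-independent offset does not change the set of minimizers, this identifies the minimum $s$-$t$ cut of $\mathcal{G}'$ with the optimal splitting. The linear case needs no work: every layer has at most one child, so $\mathcal{G}'=\mathcal{G}$, no propagation weight is ever counted twice, and the cut value is literally $T(c)$; hence I would focus entirely on the nonlinear case, where the sole defect of the raw DAG is that a cut crossing several outgoing edges of a common parent $v_p$ pays $v_p$'s propagation weight once per crossed edge, and the auxiliary-vertex gadget is precisely what repairs this.

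First I would fix the correspondence between cuts of $\mathcal{G}'$ and splittings. Given a splitting $c=\{\mathcal{V}_D,\mathcal{V}_S\}$, define its \emph{canonical} cut: every ordinary layer vertex $v_i$ and every auxiliary vertex $v_{p'}$ goes to the side of its own layer (so $v_{p'}\in\mathcal{V}_D'$ iff $v_p\in\mathcal{V}_D$), while the hub vertex $v_p$ of a multi-child parent is placed on the server side exactly when $v_p$ is on the server, or $v_p$ is on the device but at least one of its children is on the server, and on the device side otherwise. Conversely, a cut of $\mathcal{G}'$ induces a splitting by reading off, layer by layer, the side of $v_i$ (or of $v_{p'}$ for a parent). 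The structural fact driving the argument is that, by the definition of the propagation weight, all edges leaving a hub $v_p$ together with the new edge $(v_{p'},v_p)$ carry the \emph{same} weight $N a_{v_p}(1/R_D+1/R_S)$, since the smashed-data size of $v_p$ is the same regardless of which child consumes it.

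The main computation is then to evaluate $T_{\mathcal{G}'}$ on a canonical cut edge family by edge family. For the execution edges, exactly one of $(v_{p'},S)$ (the device-execution weight of $v_p$) and $(D,v_{p'})$ (the server-execution weight of $v_p$) is cut, according to whether the layer sits on the device or on the server; summing over all layers reproduces $N(T_{D,C}+T_{S,C})$ together with the up/download terms $T_{D,U}+T_{S,D}$ (using that $\sum_{v_i}k_{v_i}$ is a fixed model constant to rewrite the server-side parameter terms if needed). For the propagation edges, the canonical placement is engineered so that the common weight of a hub $v_p$ is paid exactly once---through the single cut edge $(v_{p'},v_p)$, while none of the edges out of $v_p$ are cut---precisely when $v_p$'s smashed data actually has to reach the server (some child of $v_p$ on the server), and is paid not at all otherwise; inter-layer propagation edges of non-parent vertices are handled identically. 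Summed over the frontier set $\mathcal{V}_c$ this gives $N(T_{D,S}+T_{S,G})$, so $T_{\mathcal{G}'}(c_{\mathrm{canon}})=T(c)$ up to the constant, which already yields $\min_{\mathcal{G}'}T_{\mathcal{G}'}\le \min_c T(c)+\text{const}$.

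The step I expect to be the real obstacle is the reverse inequality: showing that a minimum cut of $\mathcal{G}'$ may always be taken canonical, so that its value cannot fall below $\min_c T(c)+\text{const}$. I would prove this by a local exchange argument---starting from any cut, moving a single hub $v_p$ to the side prescribed by the canonical rule alters only the edges incident to $v_p$ (the edge $(v_{p'},v_p)$ and the edges to $v_p$'s children in $\mathcal{G}'$), and a short case check on the layer's side and on how many of $v_p$'s children are on the server shows the cut value never increases, so after finitely many moves the cut becomes canonical. Assumption~\ref{assumption 1} enters here to discard the degenerate placements---those for which the induced partition is not a realizable device/server split obeying the delay model $T(c)$. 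Combining the two inequalities gives $\min_{\mathcal{G}'}T_{\mathcal{G}'}=\min_c T(c)+\text{const}$, and the correspondence of the first step then turns a minimum cut of $\mathcal{G}'$ into an optimal splitting and vice versa, proving Theorem~\ref{theorem: min-cut is equivalent to min delay}. The most delicate bookkeeping throughout is at the hubs: one must verify that the gadget charges each parent's transmission weight exactly once in \emph{every} cut configuration, not only in the canonical one.
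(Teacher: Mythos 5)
Your proposal is correct in outline but takes a genuinely different route from the paper. The paper's proof is a three-way taxonomy of cuts at a multi-child parent $v_p$ --- the cut crosses all of $v_p$'s outgoing edges, only some of them, or none --- argued on the concrete examples of Figs.~4 and~5: for the ``all'' case it observes that in $\mathcal{G}'$ the cut through the single auxiliary edge $(v_{p'},v_p)$ has the same value as the true training delay and strictly dominates the cut through all the duplicated outgoing edges; for the ``some'' case it invokes Assumption~\ref{assumption 1} to show such a split is suboptimal \emph{both} as a training schedule and as a cut of $\mathcal{G}'$ (Eqs.~(17)--(18)), so it can be discarded on both sides; the ``none'' case matches trivially. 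You instead build an explicit canonical correspondence between splittings and cuts of $\mathcal{G}'$, prove the value identity $T_{\mathcal{G}'}(c_{\mathrm{canon}})=T(c)$ edge family by edge family, and close the loop with a local exchange argument showing a minimum cut may be taken canonical; the observation that all outgoing edges of a hub and the edge $(v_{p'},v_p)$ carry the identical weight $Na_{v_p}(1/R_D+1/R_S)$ is exactly the engine of the paper's ``all'' case, and your hub-relocation step is the rigorous, topology-independent version of the paper's $T_{\mathcal{G}'}(c_{1'})<T_{\mathcal{G}'}(c_1)$ comparison. What your route buys is generality and a cleaner logical structure: the paper's partial-crossing case needs no elimination at all in your framework, since the canonical cut for such a splitting already pays the parent's propagation weight exactly once and matches $T(c)$, whereas the paper must argue suboptimality twice. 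What it costs is that the two delicate points are left as sketches: the claimed cut-independent additive constant in the parameter up/download terms deserves an explicit verification against Eqs.~(10)--(11) and (3)--(6), and your use of Assumption~\ref{assumption 1} to ``discard degenerate placements'' should be pinned down --- its real role (in both proofs) is to guarantee that the optimum is attained at a predecessor-closed device set, so that partitions with backward (server-to-device) data flow, which neither $T(c)$ nor the directed cut value prices correctly, can be excluded from consideration.
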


\begin{proof}
{Please refer to Appendix~A.}
\end{proof}

\section{Block-Wise Model Partitioning Algorithm} \label{sec: block-wise model partition Algorithm}

In this section, we present a block-wise model partitioning algorithm to determine the optimal model partition for the AI models with block structures. 

\subsection{Intra-Block Cut Analysis} \label{sec: Intra-Block Cut Analysis}
{In an AI model, a block is a set of layers that is often reused multiple times. When an AI model contains block structures, the computational complexity of the optimal model partition can be further reduced.} For example, the inception block is reused 9 times in GoogLeNet~\cite{szegedy2015going}, the residual block is reused 8 and 16 times in ResNet18 and ResNet50~\cite{he2016deep}, respectively, and the dense block is reused 58 and 98 times in DenseNet121 and DenseNet201~\cite{huang2017densely}, respectively.
These blocks share the same internal architecture and transformation logic, although their input and output dimensions may vary depending on their position in the model. Because of this structural repetition, analyzing a block allows us to generalize its behavior across the other blocks.

\textit{1) Block Detection:}  
To leverage block-level regularity, we first need to identify repeated blocks in the AI model. The detection process is based on the observation that many blocks exhibit a branching-aggregation structure, i.e., multiple parallel paths diverge from a single parent vertex and subsequently converge into a single vertex. When a parent vertex has multiple child vertices, these child vertices are added into a candidate set $\mathcal{V}_{B}^{m}$. We then continue tracking the successors of these child vertices until they all converge to the same vertex. At this stage, all intermediate successors and the final converged vertex are added into $\mathcal{V}_{B}^{m}$. The internal edges of this block are denoted as $\mathcal{E}_{B}^{m}$. Consequently, we obtain a block-level DAG denoted as $\mathcal{G}_{B} = (\mathcal{V}_{B}^{m}, \mathcal{E}_{B}^{m})$. We continue this detection process towards the output end of the AI model, and if $\mathcal{G}_{B}$ appears multiple times, it is retained as a reusable unit. The detailed block-detection procedure is shown in Alg.~\ref{alg:block-detection}.

\textit{2) Intra-Block Cut Identification:} {The intra-block cut identification is designed to determine whether the optimal model partition intersects the internal edges of any block. The identification only relies on the sizes of smashed data within the block and does not require device or network parameters. If the optimal model partition intersects a block's internal edges, the block must be preserved in full. Otherwise, it can be safely abstracted as a single vertex of the DAG, simplifying the DAG structure and accelerating the cut selection process.}

\begin{algorithm}[t]
\caption{Block-Detection Procedure.}
\label{alg:block-detection}
{\KwIn{$\mathcal{G}_A = (\mathcal{V}_A, \mathcal{E}_A)$\;}
\KwOut{Set of blocks $\mathbf{V}_{B} = [\mathcal{V}_{B}^{1}, \dots, \mathcal{V}_{B}^{M}]$\;}
Initialize $\mathbf{V}_{B} = \emptyset$, $M=0$\;
\For{each $v_i \in \mathcal{V}_A $}{
    \If{$v_i$ has multiple children}{
        $\mathcal{V}_{B}^{m} = \{\text{children of } v_i \}$\;
        Set $\text{queue} \gets \text{children of } v_i$\;
        \While{successors of vertices in queue do not converge}{
            Add successors of queue into $\mathcal{V}_{B}^{m}$\;
            Update $\text{queue} \gets$ successors of queue\;
        }
        Add the converged vertex into $\mathcal{V}_{B}^{m}$\;
        Extract $\mathcal{E}_{B}^{m}$ as internal edges among $\mathcal{V}_{B}^{m}$\;
    }
    $M \gets M+1$, $\mathbf{V}_{B} \gets \mathbf{V}_{B} \cup \{\mathcal{V}_{B}^{m}\}$.
}
}
\end{algorithm}

A block DAG is denoted by $\mathcal{G}_{B} = (\mathcal{V}_{B},~\mathcal{E}_{B})$, where each vertex in $\mathcal{V}_{B}$ denotes a layer within the block and each directed edge in $\mathcal{E}_{B}$ indicates the flow of smashed data between layers. The input layer of the block is denoted as vertex $v_{\text{in}}$. As shown in Fig.~\ref{fig: The DAG of a block}, $\mathcal{V}_{B} = \{v_1, \dots, v_7\}$ and $v_{\text{in}}$ is the input layer. The vertices are unweighted, and each edge is weighted by the size of the smashed data from its parent vertex. The input and output layers of the block are treated as the source and sink of $\mathcal{G}_{B}$, respectively.
{Let $c^{\text{in}}_B$ be the cut that separates the input layer from the block, and $a^{\text{in}}_B$ be the size of smashed data that needs to be transmitted under this cut. Specifically, $a^{\text{in}}_B$ is equal to the size of the smashed data generated at the input vertex $v_{\text{in}}$. Let $c^{\min}_B$ denote the minimum \textit{s-t} cut of the block, and $a^{\min}_B$ be the size of smashed data that needs to be transmitted under this cut. $a^{\min}_B$ can be computed using standard maximum-flow algorithms~\cite{stoer1994simple, Dinitz2006}.}

{If $a_B^{\min} \geq a^{\text{in}}_B$, the optimal cut does not intersect any intra-block edges. Otherwise, it may intersect the intra-block edges.}
Under Assumption~\ref{assumption 1}, we have the following theorem.
\begin{theorem} {If $a_B^{\min} \geq a^{\text{in}}_B$, the optimal cut does not intersect any intra-block edges.}
\label{theorem: not intersect}
\end{theorem}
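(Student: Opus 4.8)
The plan is to show that if $a_B^{\min} \geq a_B^{\text{in}}$, then for any candidate global cut $c$ that does intersect some intra-block edge, there exists an alternative cut $c'$ that does not intersect any intra-block edge and whose total training delay $T(c')$ is no larger than $T(c)$. This immediately implies that the optimal cut can be taken to not intersect intra-block edges. The alternative cut $c'$ is obtained from $c$ by moving the whole block in question entirely to the server side: i.e., if $c$ places some layers of the block on the device, we reassign all of its layers to the server. I would carry this out block by block, so it suffices to handle a single block $B$ whose input layer $v_{\text{in}}$ has smashed-data size $a_B^{\text{in}}$.

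First, I would fix notation for the comparison. Let $c$ be a cut that separates the block nontrivially, so the set of intra-block edges it intersects has total smashed-data weight at least $a_B^{\min}$ (since any nontrivial separation of the block's source from its sink has value at least the min-cut value $a_B^{\min}$). Let $c'$ be the cut that agrees with $c$ outside $B$ but assigns all layers of $B$ to the server. Then I would write $T(c) - T(c')$ as a sum of contributions. Moving the block's device-assigned layers to the server changes three kinds of terms: (a) computation delays — for each such layer $v_i$, the device cost $N\xi_{D,v_i}$ is replaced by the server cost $N\xi_{S,v_i}$, a nonnegative saving by Assumption~\ref{assumption 1}; (b) device-side model upload/download weights $k_{v_i}/R_D$ versus $k_{v_i}/R_S$ — here I must be careful, and I would either invoke Assumption~\ref{assumption 1} in the strengthened form that it covers these transmission terms too, or note that in the intra-block identification only smashed-data sizes are claimed to matter and argue these $k$-terms are dominated; (c) the transmission of smashed data / gradients — under $c$ this includes the intra-block cut edges (total weight proportional to at least $a_B^{\min}$ in the relevant $N(1/R_D + 1/R_S)$ units), whereas under $c'$ the block contributes only the single edge crossing from $v_{\text{in}}$ into the block, of weight proportional to $a_B^{\text{in}}$.

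Assembling these, the net change $T(c) - T(c')$ is at least the computation savings (nonnegative) plus the transmission difference, which is proportional to $a_B^{\min} - a_B^{\text{in}} \geq 0$ by hypothesis, minus possibly some boundary bookkeeping for edges leaving the block on the server side (which are unchanged between $c$ and $c'$ since those downstream layers were already on the server under $c$ — I'd verify this using the DAG structure, namely that a layer downstream of a block cannot be on the device unless the block's output is too). Hence $T(c) \geq T(c')$, and $c'$ does not cut any intra-block edge of $B$. Iterating over all blocks, we conclude there is an optimal cut intersecting no intra-block edges.

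The main obstacle I anticipate is the transmission-weight terms tied to the model parameters $k_{v_i}$ in parts (b) and (c) above: the theorem's hypothesis is purely about smashed-data sizes ($a_B^{\min}$ versus $a_B^{\text{in}}$), so to get a clean inequality I expect to need either an additional monotonicity assumption bundled into Assumption~\ref{assumption 1} (server transmission at least as favorable, so $k_{v_i}/R_D \geq k_{v_i}/R_S$, consistent with the $\xi$ assumption) or an argument that moving layers serverward only removes device-side $k_{v_i}/R_D$ costs and adds server-side $k_{v_i}/R_S$ costs that are individually no larger. A secondary subtlety is correctly accounting for which block-boundary edges are cut under $c$ versus $c'$ in a general DAG — I would handle this by arguing that the set $\mathcal V_D$ under $c$, restricted to vertices reachable from or reaching $B$, is "upward closed toward the block input," so that reassigning $B$'s layers to the server does not force any other vertex to move and does not create new cut edges outside $B$ except possibly replacing several of $B$'s outgoing device-to-server edges by none (since those heads were already on the server) and $B$'s incoming edges by the single $v_{\text{in}}$ edge.
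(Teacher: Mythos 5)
Your exchange argument is essentially the paper's own proof: Appendix~A writes exactly the decomposition you describe, namely $T(c^{\min}_{B})-T(c^{\text{in}}_{B}) = N\left(a_B^{\min}-a_B^{\text{in}}\right)\!\left(\tfrac{1}{R_D}+\tfrac{1}{R_S}\right) + \sum_{v_i}k_{v_i}\!\left(\tfrac{1}{R_D}+\tfrac{1}{R_S}\right) + N\sum_{v_i}\left(\xi_{D,v_i}-\xi_{S,v_i}\right)$ over the block layers moved to the server, and concludes term-by-term nonnegativity from $a_B^{\min}\ge a_B^{\text{in}}$ and Assumption~1 (your version is in fact slightly more careful, since you compare an arbitrary intra-block-intersecting cut to its modification via the min-cut lower bound, whereas the paper only compares $c^{\min}_B$ with $c^{\text{in}}_B$). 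The one obstacle you flagged, the parameter terms $k_{v_i}$, does not actually arise: in the delay model both the download delay $T_{S,D}$ and the upload delay $T_{D,U}$ (Eqs.~(3) and~(6)) are summed only over device-resident layers, so reassigning a block layer to the server removes \emph{both} $k_{v_i}/R_D$ and $k_{v_i}/R_S$ and adds nothing, and no additional monotonicity assumption on transmission rates is needed.
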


\begin{proof}
    Please refer to Appendix~B.
\end{proof}

\begin{figure}[t] 
	\renewcommand{\figurename}{Fig.}
	\centering
	\includegraphics[width=0.23\textwidth]{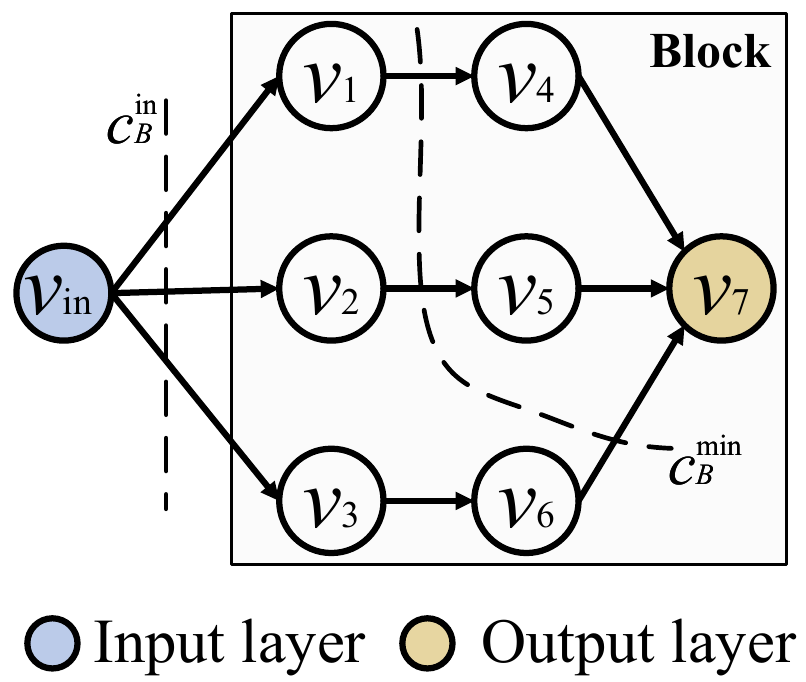}
	\caption{An example of a block DAG, where each edge weight indicates the size of smashed data.}
	\label{fig: The DAG of a block}
\end{figure}

\subsection{Block-Level Abstraction} \label{subsec: Block-Level Abstraction}

{When the optimal model partition does not intersect any intra-block edges, each repeated block in the AI model can be abstracted as a single vertex in the DAG. The detailed
descriptions of the block-level abstraction are as follows.}

\textit{ 1) Procedure:} {Firstly, let $\overline{\mathcal{G}} = (\overline{\mathcal{V}} \cup \{v_D, v_S\}, \overline{\mathcal{E}})$ denote a simplified DAG, where $\overline{\mathcal{V}} = \mathcal{V}$ and $\overline{\mathcal{E}} = \mathcal{E}$. {Let $M$ denote the number of blocks in $\overline{\mathcal{G}}$, and the set of blocks is indexed by $ \mathbf{V}_{B} = [\mathcal{V}_{B}^{1}, \mathcal{V}_{B}^{2}, \dots, \mathcal{V}_{B}^{M}]$.} Each block is sequentially replaced by a single vertex in $\overline{\mathcal{G}}$.} Let $v^m_B$ denote the vertex that replaces block $\mathcal{V}_{B}^{m}$. 

Secondly, the edge weights between $v^m_B$ and other vertices in $\overline{\mathcal{G}}$ are calculated based on the original connections of block $\mathcal{V}_{B}^{m}$. The weight of the edge from $v^m_B$ to sink vertex $v_S$, denoted by $w_{(v^m_B, v_S)}$, is the sum of the device execution weights of all vertices within block $\mathcal{V}_{B}^{m}$, i.e., 
\begin{equation}\label{equ: The weight between moudle and server}
 \begin{split}
   w_{(v^m_B, v_S)} = \sum_{v_b \in \mathcal{V}_{B}^{m}}  w_{(v_b, v_S)}.
 \end{split}
\end{equation}

Thirdly, the weight of the edge from source vertex $v_D$ to $v^m_B$ in $\overline{\mathcal{G}}$, denoted by $w_{(v_D, v^m_B)}$, is the sum of the server execution weights of all vertices within this block, i.e.,
\begin{equation}\label{equ: The weight between device and block}
 \begin{split}
   w_{(v_D, v^m_B)} = \sum_{v_b \in \mathcal{V}_{B}^{m}}  w_{(v_D, v_b)}.
 \end{split}
\end{equation}

Fourthly, for any parent vertex of block $\mathcal{V}_{B}^{m}$, denoted as $v_{p}$, the weight of the edge from $v_p$ to  vertex $v^m_B$, denoted by $w_{(v_p, v^m_B)}$, is given by
\begin{equation}\label{equ: The weight between front vertex and block}
 \begin{split}
   w_{(v_p, v^m_B)} = w_{(v_p, v_b)},
 \end{split}
\end{equation}
where $v_b$ is any vertex in block $\mathcal{V}_{B}^{m}$. 

Fifthly, for any child vertex of block $\mathcal{V}_{B}^{m}$, denoted as $v_{\tilde{b}}$, the weight of the edge from vertex $v_{B}^{m}$ to vertex $v_{\tilde{b}}$, denoted by $w_{(v_{B}^{m}, v_{\tilde{b}})}$, is given by
\begin{equation}\label{equ: The weight between behind vertex and block}
 \begin{split}
   w_{(v_{B}^{m}, v_{\tilde{b}})} = \sum_{v_b \in \mathcal{V}_{B}^{m}} w_{(v_b, v_{\tilde{b}})}.
 \end{split}
\end{equation}
The same abstraction procedure is applied to all blocks in the model. 
\begin{figure}[t] 
	\renewcommand{\figurename}{Fig.}
	\centering
	\includegraphics[width=0.45\textwidth]{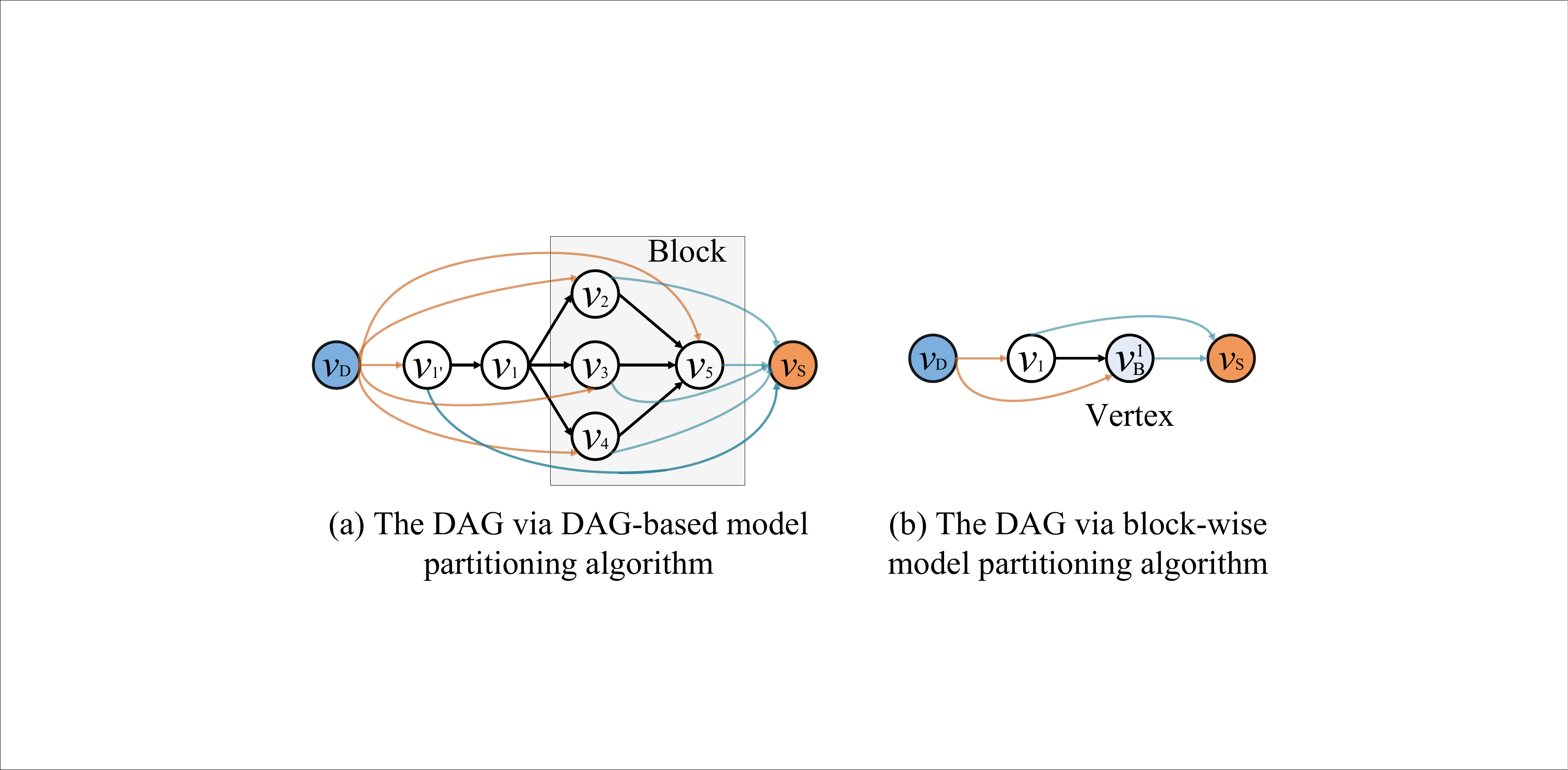}
	\caption{An example of DAG transformation via general and block-wise model partitioning algorithms.}
	\label{fig: simplified DAG}
\end{figure}


\textit{ 2) Example:} To illustrate the above transformation process more clearly, we provide an example as follows. Fig.~\ref{fig: simplified DAG}(a) shows a DAG $\mathcal{G}'$ constructed by the general model partitioning algorithm, where a block is highlighted. Its corresponding DAG $\overline{\mathcal{G}}$ constructed by the block-wise model partitioning algorithm, is shown in Fig.~\ref{fig: simplified DAG}(b). In $\overline{\mathcal{G}}$, the entire block is replaced by a single vertex $v^1_B$. The weight of the edge from this integrated vertex $v^1_B$ to the source vertex $v_S$, denoted as $w_{(v^1_B, v_S)}$, is the sum of the weights of all original edges from $v_S$ to the vertices within the block:
\begin{equation}\label{equ: server execution weight in simplified DAG}
 \begin{split}
   w_{(v^1_B, v_S)} = w_{(v_2, v_S)} + w_{(v_3, v_S)} + w_{(v_4, v_S)} + w_{(v_5, v_S)}.
 \end{split}
\end{equation}
Similarly, the weight of the edge from the sink vertex $v_D$ to $v^1_B$, denoted as $w_{(v_D, v^1_B)}$, is the sum of the weights of all original edges from the intra-block vertices to $v_D$:
\begin{equation}\label{equ: device execution weight in simplified DAG}
 \begin{split}
   {w_{(v_D, v^1_B)} = w_{(v_D, v_2)} + w_{(v_D, v_3)} + w_{(v_D, v_4)} + w_{(v_D, v_5)}.}
 \end{split}
\end{equation}
The weight of the edge from vertex $v_1$ to integrated vertex $v^1_B$, denoted as $w_{(v_1, v^1_B)}$, is the weight of any one of the edges from $v_1$ to the intra-block vertices:
\begin{equation}\label{equ: propagation weight in simplified DAG}
 \begin{split}
   w_{(v_1, v^1_B)} = w_{(v_1, v_2)}.
 \end{split}
\end{equation}

For the DAG in Fig.~\ref{fig: simplified DAG}(a), there are 17 edges and eight vertices. After the block integration by using the block-wise model partitioning algorithm, there are only five edges and four vertices in the simplified DAG, as shown in Fig.~\ref{fig: simplified DAG}(b). The block-wise model partitioning algorithm can make the DAG more streamlined, thereby reducing the computational complexity of finding the optimal model partition.

\subsection{Algorithm Execution}
{For an arbitrary AI model, we first run the block-detection procedure to check whether the model contains repeated blocks. If no repeated blocks are found, we directly apply the general model-partitioning algorithm. Otherwise, we perform intra-block cut identification to determine whether cutting within a block can be optimal (based on the condition derived in Section~\ref{sec: Intra-Block Cut Analysis}). If so, we keep the blocks expanded in the DAG. Otherwise, each repeated block is abstracted as a single vertex to simplify the DAG, and the optimal model partition can then be obtained more efficiently. The detailed procedure of the proposed block-wise model-partitioning algorithm is presented in Alg.~\ref{ag: Block-Wise Model Partition Algorithm}.}

\begin{algorithm}[t]
    \caption{Block-Wise Model Partition Algorithm.}
    \label{ag: Block-Wise Model Partition Algorithm}
    {\KwIn{$\mathcal{G} = (\mathcal{V} \cup \{v_D, v_S\}, \mathcal{E})$\;}}
    {\KwOut{$c^{\ast}$\;}}
    Calculate $a^{\text{in}}_B$ and $a_B^{\min}$ based on the intra-block cut detection\;
    \uIf{$a_B^{\min} < a^{\text{in}}_B $}{
    \tcp{The optimal cut intersects intra-block edges.}
    Obtain optimal cut $c^{\ast}$ by inputting DAG $\mathcal{G}$ into Algorithm~\ref{ag: general model partitioning algorithm}\; }
    \Else{
    \tcp{The optimal cut does not intersect any intra-block edges.}
    $\overline{\mathcal{E}} \gets \mathcal{E}$, $\overline{\mathcal{V}} \gets \mathcal{V}$\;
    Identify and define the block set $\mathbf{V}_{B}$\;
    \For{$\mathcal{V}_{B}^{m}$ in $\mathbf{V}_{B}$}{
        Replace block $\mathcal{V}_{B}^{m}$ with vertex $v_{B}^{m}$ in $\overline{\mathcal{V}}$\; 
        Add edges $(v_n, v^m_B)$, $(v_D, v^m_B)$, $(v^m_B, v_{\tilde{n}})$, and$ (v^m_B, v_S)$ into $\overline{\mathcal{E}}$\;
        Calculate these edge weights connected to vertex $v_{B}^{m}$ based on Eqs.~(\ref{equ: The weight between moudle and server})–(\ref{equ: The weight between behind vertex and block})\;
        Assign edge weights to these edges\;

    }
    $\overline{\mathcal{G}} = (\overline{\mathcal{V}} \cup \{v_D, v_S\}, \overline{\mathcal{E}})$\;
    Obtain optimal cut $c^{\ast}$ by inputting DAG $\overline{\mathcal{G}}$ into Algorithm~\ref{ag: general model partitioning algorithm}.
    }
\end{algorithm}

\subsection{Computational Complexity Analysis} \label{sec: Computational Complexity Analysis}

{The computational complexity of the proposed model-partitioning algorithms depends on the structure of the AI model, i.e., whether it is linear or non-linear. For a linear neural network, the optimal model partition can be directly obtained using a simple brute-force search method on the AI model, since no parent vertex has multiple outgoing edges.} Therefore, the computational complexity of the proposed algorithm is $O(L)$ for a linear neural network, where $L$ is the number of layers. For a non-linear neural network, we represent it as a DAG and compute the minimum \textit{s-t} cut using a maximum flow algorithm. In this work, we adopt Dinic’s algorithm~\cite{Dinitz2006}, which is known for its efficiency in network flow problems. The computational complexity of Dinic's algorithm is $O(|\mathcal{V}|^{2}|\mathcal{E}|)$, where $|\mathcal{V}|$ is the number of vertices, and $|\mathcal{E}|$ is the number of edges in the DAG. Therefore, the computational complexity of our proposed algorithm is $O(|\mathcal{V}|^{2}|\mathcal{E}|)$ for non-linear neural networks.

If a brute-force search is applied to a non-linear neural network, the total number of candidate cut configurations is \( 2^{|\mathcal{V}|} \), where each subset denotes a possible cut. For each configuration, connectivity validation is performed, which incurs a complexity of \( O(|\mathcal{V}| + |\mathcal{E}|)\). Hence, the overall computational complexity of the brute-force method is \( O(2^{|\mathcal{V}|} \cdot (|\mathcal{V}| + |\mathcal{E}|)) \), making it impractical for large-scale non-linear neural networks. In contrast, our proposed algorithm achieves significantly lower computational complexity compared with the brute-force search method when applied to non-linear neural networks.

\subsection{{Discussion}}

The proposed model partitioning algorithms can be extended to minimize the overall system energy consumption during model training or inference. This can be achieved by redefining the edge weights in the constructed DAG to represent the corresponding energy consumption. In this way, the minimum \textit{s-t} cut naturally yields the energy-optimal model partition, enabling energy-aware split strategies for various training and inference scenarios. Furthermore, the proposed model partitioning algorithms can also be extended to transformer-based large language models (LLMs). In LLMs, components such as the embedding layer, transformer blocks, and classification head can be treated as distinct blocks. By applying the block-wise model partitioning algorithm, the optimal model partition of an LLM can be efficiently determined.

\section{Performance Evaluation}\label{sec: simulation results}
In this section, we first evaluate the performance of the proposed general and block-wise model partitioning algorithms for determining the optimal model partition. Then, we assess the performance of the proposed solution in wireless networks.

\subsection{Performance of the Proposed Algorithms}
\subsubsection{Simulation Setup}
In this subsection, the proposed general and block-wise model partitioning algorithms are evaluated by executing them on a server to obtain the optimal model partition. This server is equipped with an Intel Xeon(R) Gold 6226R CPU. The proposed partition algorithms are compared with the following benchmarks:  
\begin{itemize}
    \item \textbf{Brute-force}~\cite{lin2024efficient}, which exhaustively examines all possible subsets to find the optimal model partition.

    \item \textbf{Regression}~\cite{wen2025training}, which fits the quantitative relationship between the cut position and key factors in an AI model, thereby transforming the optimal model partitioning problem into a continuous optimization problem. The optimal model partition is then obtained by solving this optimization problem.
\end{itemize}
Since the regression method cannot handle non-linear neural networks directly, the block-level abstraction described in Section~\ref{subsec: Block-Level Abstraction} is applied to convert the model into a linear form, enabling compatibility with the regression approach.

To compare the cuts obtained by the proposed algorithms with those from the brute-force search method, we evaluate AI models containing non-linear blocks. However, due to the exponential computational complexity of brute-force search, applying it to non-linear models is impractical. {Therefore, we consider three representative block-based networks, each incorporating a single type of non-linear block, as illustrated in Fig.~\ref{fig: DNN block}. These blocks are:
\textit{residual blocks}~\cite{targ2016resnet}, which introduce skip connections that add inputs directly to the outputs of convolutional layers and are widely used in ResNet;
\textit{inception blocks}~\cite{szegedy2017inception}, which combine multiple convolutional filters of varying kernel sizes and are commonly adopted in GoogLeNet;
and \textit{dense blocks}~\cite{huang2018condensenet}, which connect each layer to all subsequent layers to enhance feature reuse and are widely used in DenseNet.}

In addition, to evaluate the of the proposed algorithms, experiments are conducted on four widely adopted AI models with varying depths and architectural patterns: \textit{GoogLeNet}, a 22-layer model with nine inception blocks, an average layer output size of 22 MB, and a total computational load of 50 GFlops; \textit{ResNet18}, an 18-layer model with eight residual blocks, an average layer output size of 1.1 MB, and a total computational load of 1.8 GFlops; \textit{ResNet50}, a 50-layer model with 16 residual blocks, an average layer output size of 1.5 MB, and a total computational load of 3.7 GFlops; and \textit{DenseNet121}, a 121-layer model with 58 dense blocks, an average layer output size of 29 MB, and a total computational load of 92 GFlops.

The \textit{running time} is the time taken by the server to execute an algorithm and identify the optimal model partition for SL. The \textit{probability of the optimal cut} refers to the probability that the cut obtained by an algorithm exactly matches the one obtained by the brute-force search method.

\begin{figure}[t]
    \centering
    \renewcommand{\thesubfigure}{(\alph{subfigure})}
    \begin{subfigure}[b]{0.34\textwidth}
        \includegraphics[width=\textwidth]{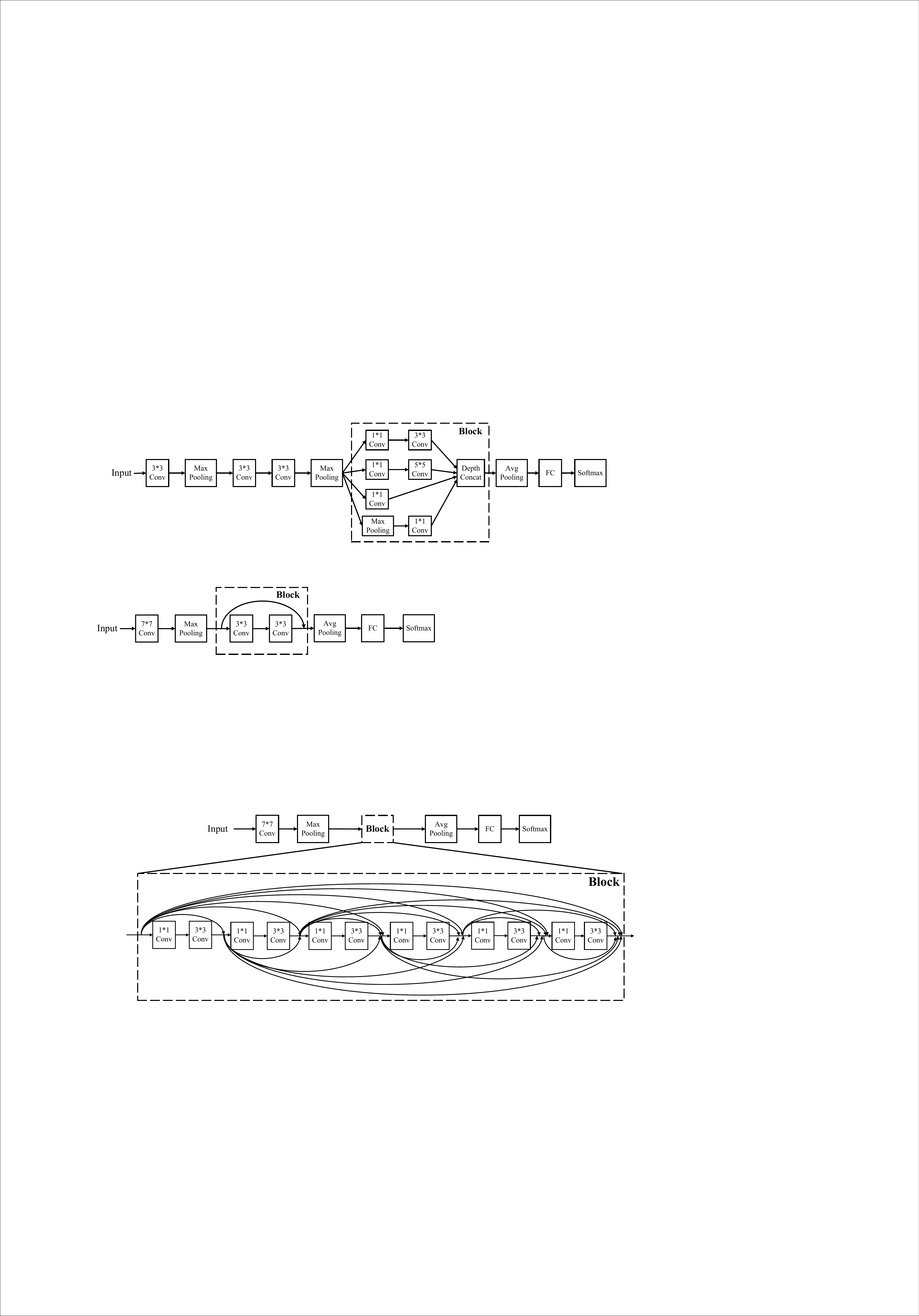} 
        \caption{Network with a residual block}
    \end{subfigure}
    \hfill
    \begin{subfigure}[b]{0.45\textwidth}
        \includegraphics[width=\textwidth]{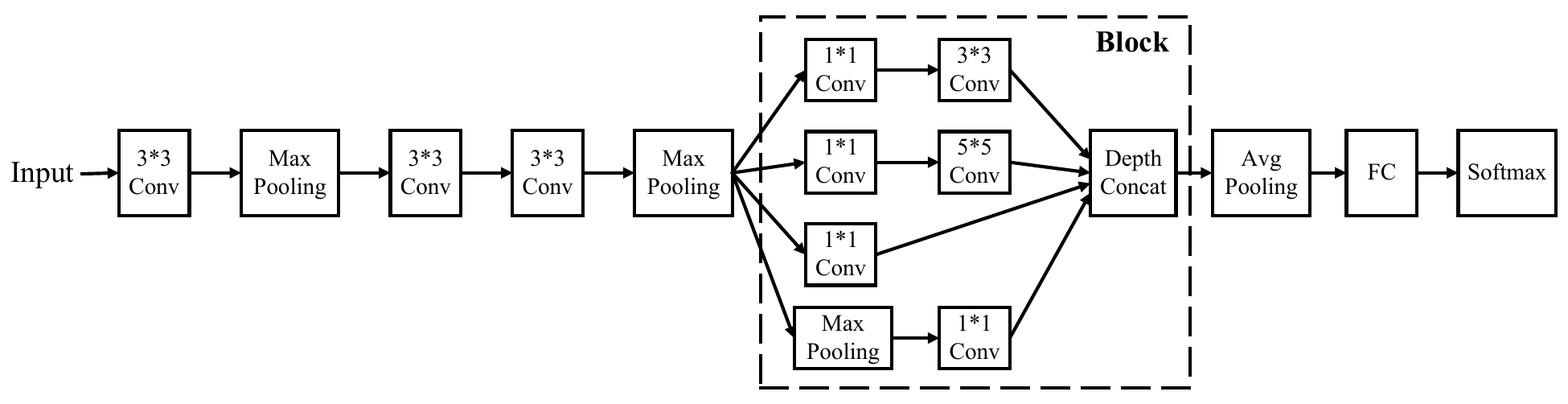} 
        \caption{Network with an inception block}
    \end{subfigure}
    \hfill
    \begin{subfigure}[b]{0.49\textwidth}
        \includegraphics[width=\textwidth]{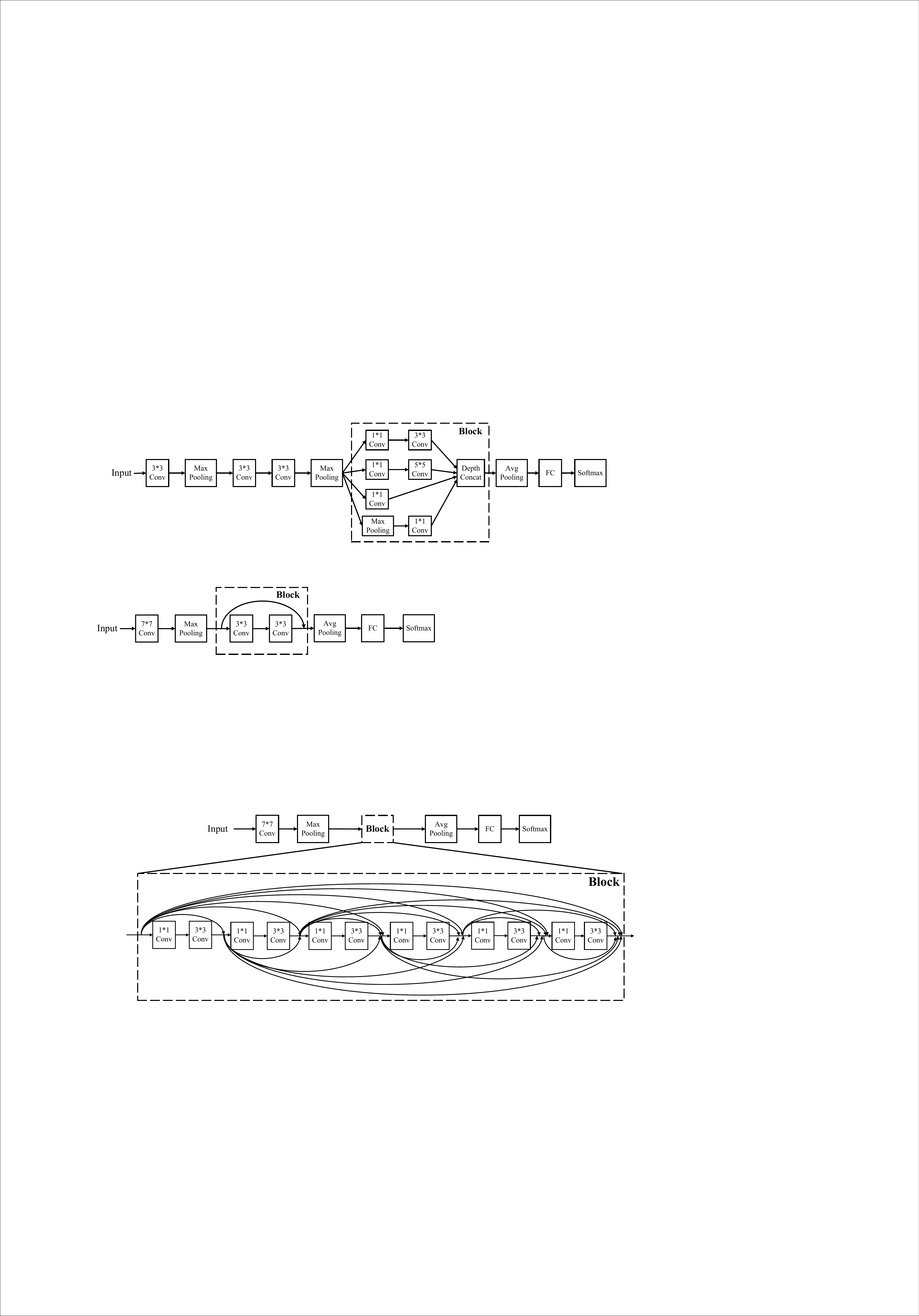} 
        \caption{Network with a dense block}
    \end{subfigure}
    
    \caption{{The structures of block-based networks with residual, inception, and dense blocks, which exhibit different intra-block connections.}}
    \label{fig: DNN block}
\end{figure}

\subsubsection{Effectiveness of the Proposed Algorithms} 

Since the regression method has a constant-time computational complexity of $\mathcal{O}(1)$, its theoretical complexity is omitted from the experimental analysis. Fig.~\ref{fig: complexity and propagation in DNN block}(a) illustrates the computational complexity comparison between our proposed algorithms and the brute-force search method on networks with a single block. The general model partitioning algorithm significantly outperforms the brute-force search method, achieving reductions in computational complexity by $1.9 \times$, $143.3 \times$, and $166.1 \times$ for networks with residual, inception, and dense blocks, respectively. Moreover, the block-wise model partitioning algorithm further streamlines the process, achieving additional reductions in complexity of $3.2 \times$, $4.9 \times$, and $66.9 \times$ over the general model partitioning algorithm. This is because, in block-structured AI models, the block-wise model partitioning algorithm simplifies the DAG architecture, enabling more efficient identification of the optimal model partition.

Figure~\ref{fig: complexity and propagation in DNN block}(b) presents the average probability of the optimal model partition over 1,000 simulation runs on networks with a single block. Although the regression method has the lowest computational complexity, it suffers from a low probability of the optimal model partition. For example, on the networks with residual and dense blocks, the probability is only 73.6\%. In particular, its probability drops to zero on the network with an inception block. This is because the regression method fails to accurately capture the relationship between the split point and model parameters, particularly the size of the smashed data. In contrast, our proposed algorithms obtain the optimal model partition across all three network types. This is because the cuts are optimal by the proposed algorithms, which is proved in Theorem~\ref{theorem: min-cut is equivalent to min delay}.  

Figure~\ref{fig: Computational complexity in AI models} presents the computational complexity of three algorithms on different AI models. The general model partitioning algorithm significantly reduces computational complexity compared to the brute-force search method. In contrast, the block-wise model partitioning algorithm further reduces it compared to the general model partitioning algorithm. Specifically, for DenseNet121, the general model partitioning algorithm achieves a $5.8 \times 10^{33}$ reduction in computational complexity compared to the brute-force search method. The block-wise model partitioning algorithm further reduces the complexity by an additional factor of $1.7 \times 10^3$ compared to the general model partitioning algorithm. That is because DenseNet121 has the most repeated blocks among these models.

\begin{figure}[t]
    \centering
    \renewcommand{\thesubfigure}{(\alph{subfigure})}
    \begin{subfigure}[b]{0.24\textwidth}
        \includegraphics[width=\textwidth]{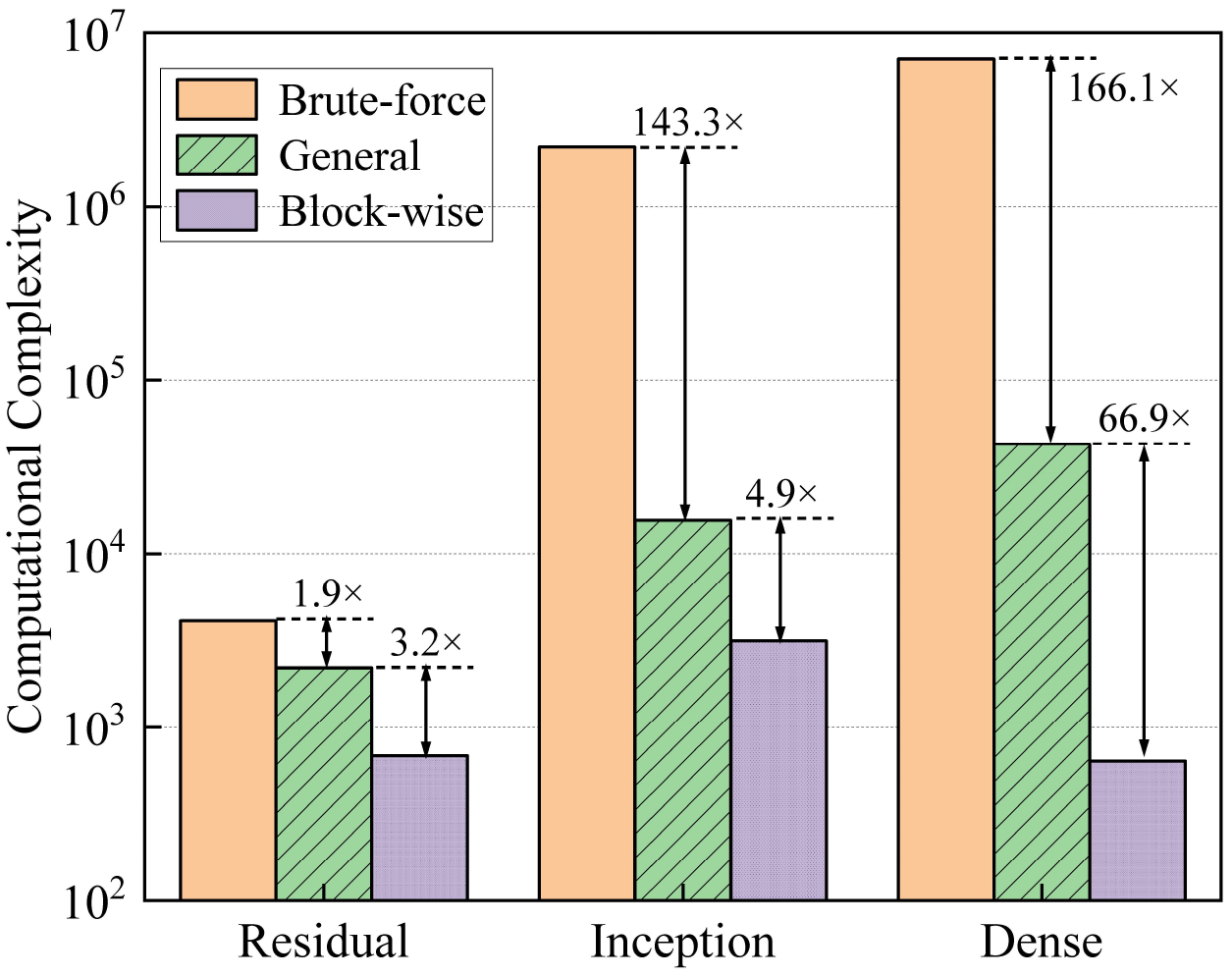} 
        \caption{Computational complexity}
    \end{subfigure}
    \hfill
    \begin{subfigure}[b]{0.24\textwidth}
        \includegraphics[width=\textwidth]{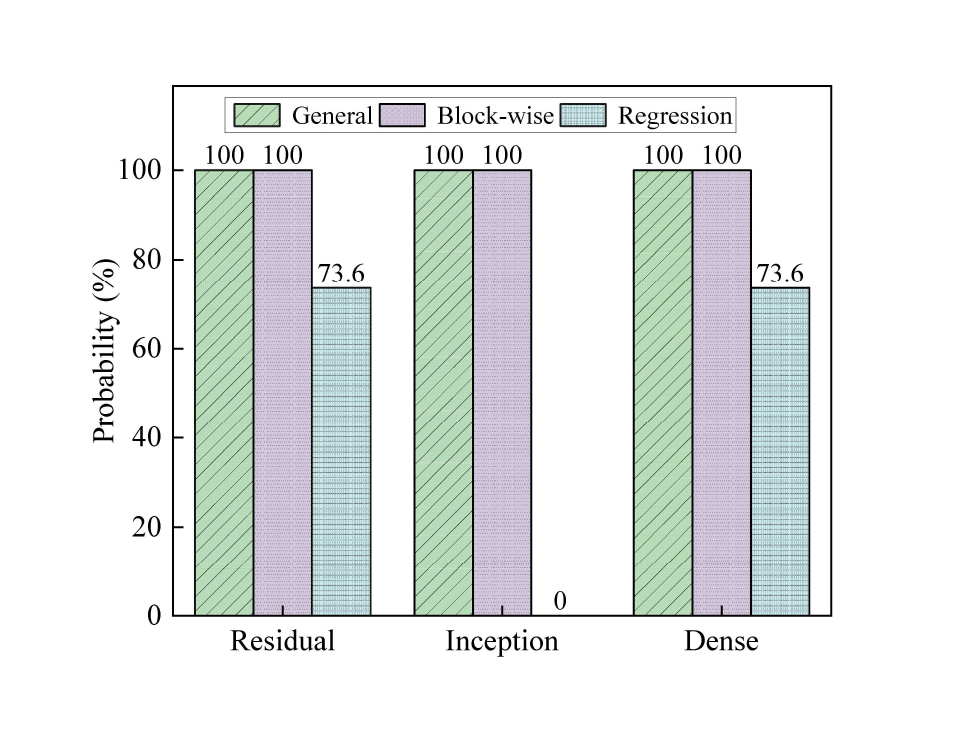} 
        \caption{Probability of the optimal cut}
    \end{subfigure}
    \caption{Performance comparison among the proposed algorithms and benchmarks on networks with a single block.}
    \label{fig: complexity and propagation in DNN block}
\end{figure}

\begin{figure}[t] 
	\renewcommand{\figurename}{Fig.}
	\centering
	\includegraphics[width=0.32\textwidth]{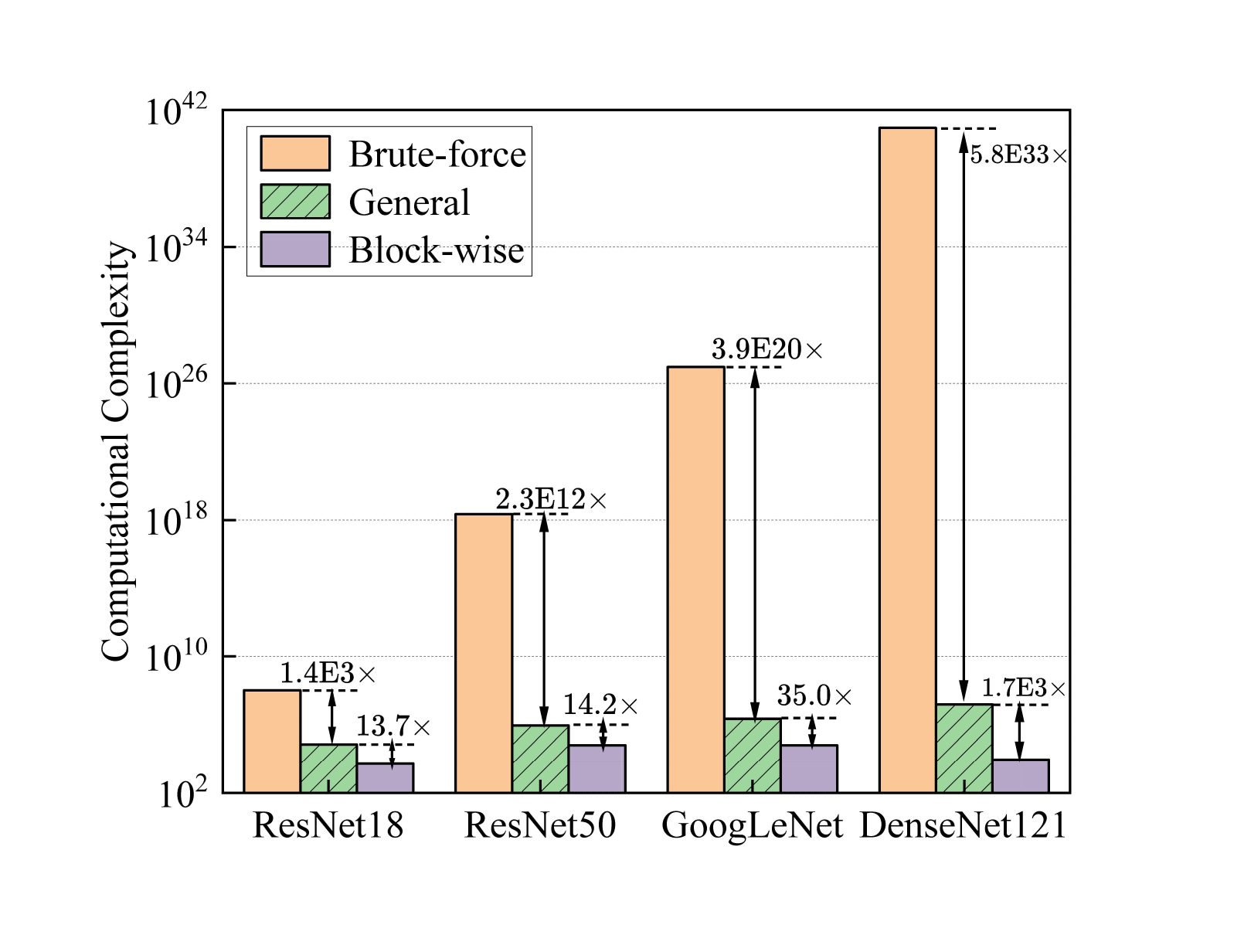}
	\caption{Computational complexity comparison among the proposed algorithms and benchmarks on different AI models.}
	\label{fig: Computational complexity in AI models}
\end{figure}

\subsubsection{Running Time of the Proposed Algorithms} 
We evaluate the running time of the proposed algorithms on both networks, with single-block and full AI models. Due to the high complexity of full AI models, we only measure the running time of the brute-force search method on simpler block-based networks. All experimental results are averaged over 1,000 simulation runs to ensure statistical reliability.

Figure~\ref{fig: Running time comparison among the proposed algorithms and benchmarks}(a) compares the running time of the proposed algorithms against two baselines on the block-based networks. {Although these networks have relatively simple block structures, the brute-force search method still requires up to seconds of running time. Therefore, the brute-force search method is clearly impractical for practical AI models, since these models (e.g., ResNet18 and GoogLeNet) are significantly more complex than block-based networks. 
In contrast, the proposed algorithms significantly improve the computational efficiency of determining the optimal cut.} The general model partitioning algorithm reduces the running time by $12.1 \times$, $4{,}015.6 \times$, and $9{,}998.4 \times$ across the three networks, respectively, compared to the brute-force search method. Furthermore, the block-wise model partitioning algorithm achieves additional reductions of $1.2 \times$, $1.9 \times$, and $3.1 \times$ compared to the general model partitioning algorithm. {Both algorithms can determine the optimal cut for SL within milliseconds.}

Figure~\ref{fig: Running time comparison among the proposed algorithms and benchmarks}(b) presents the running time of the proposed algorithms on four full AI models. The results show that the block-wise model partitioning algorithm consistently achieves lower running time than the general model partitioning algorithm across all AI models. {Moreover, both algorithms can still determine the optimal cut within milliseconds, even for these non-linear AI models. This is well below the response latency tolerance of 200~ms~\cite{tan2018Supporting, xu2025bridging}, indicating that the decision-making process can be completed in a timely manner even in dynamic environments.} Although the regression method achieves the lowest overall running time, it fails to identify the optimal model partition, resulting in a significantly higher training delay. The limitations of the regression method will be presented in the following subsection.

{To quantitatively compare the running time of the proposed algorithms with the model training delay per iteration, we conducted measurements on our hardware testbed. The experimental results are summarized in Table~\ref{tab: Running time and training delay}. It can be observed that the average running time required to determine the cut is on the millisecond level, whereas the average training delay per iteration is on the minute level. Therefore, the running time is several orders of magnitude smaller than the training delay and can be safely ignored in practical training processes.
}

\begin{figure}[t]
    \centering
    \renewcommand{\thesubfigure}{(\alph{subfigure})}
    \begin{subfigure}[b]{0.24\textwidth}
        \includegraphics[width=\textwidth]{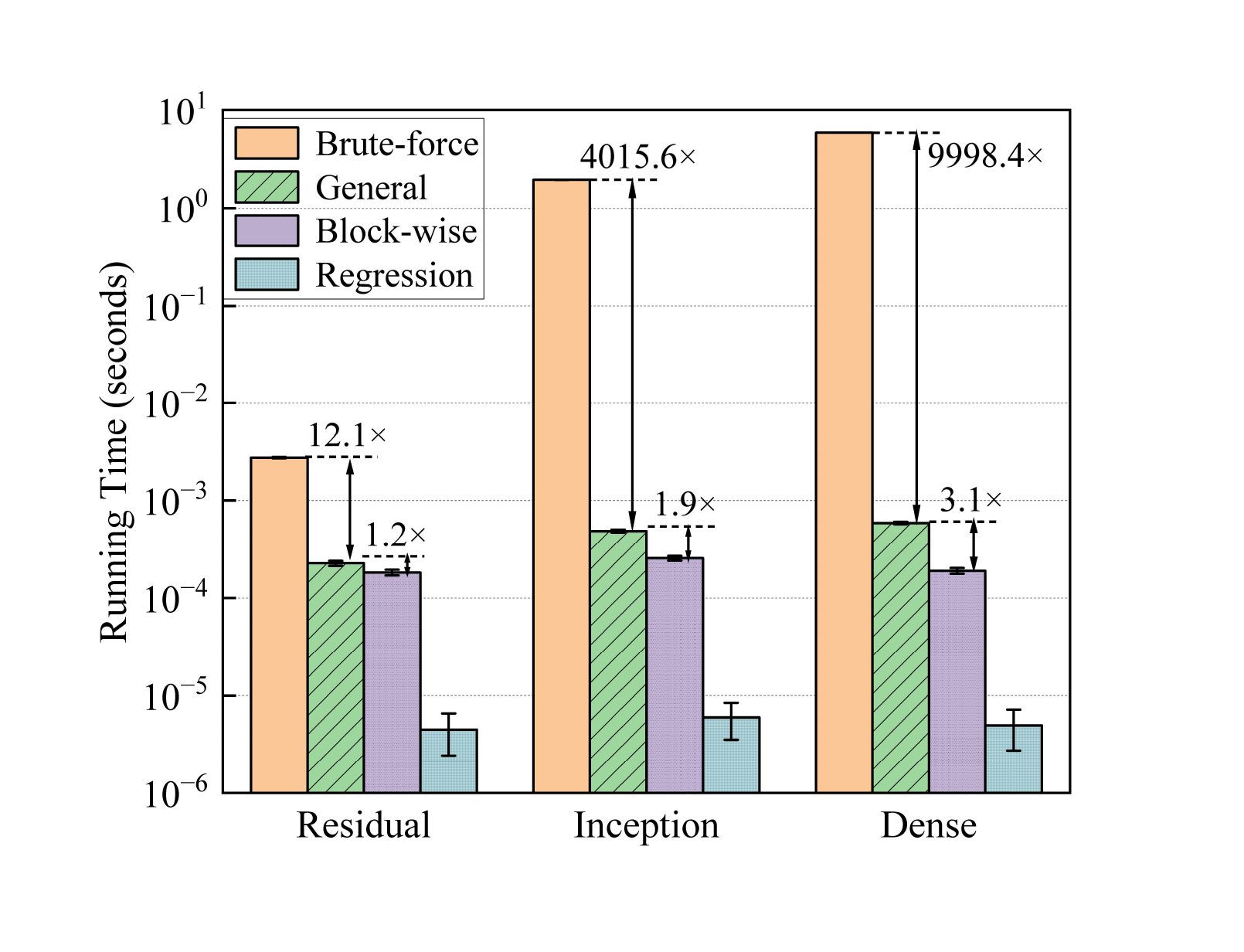} 
        \caption{Network with a block}
    \end{subfigure}
    \hfill
    \begin{subfigure}[b]{0.24\textwidth}
        \includegraphics[width=\textwidth]{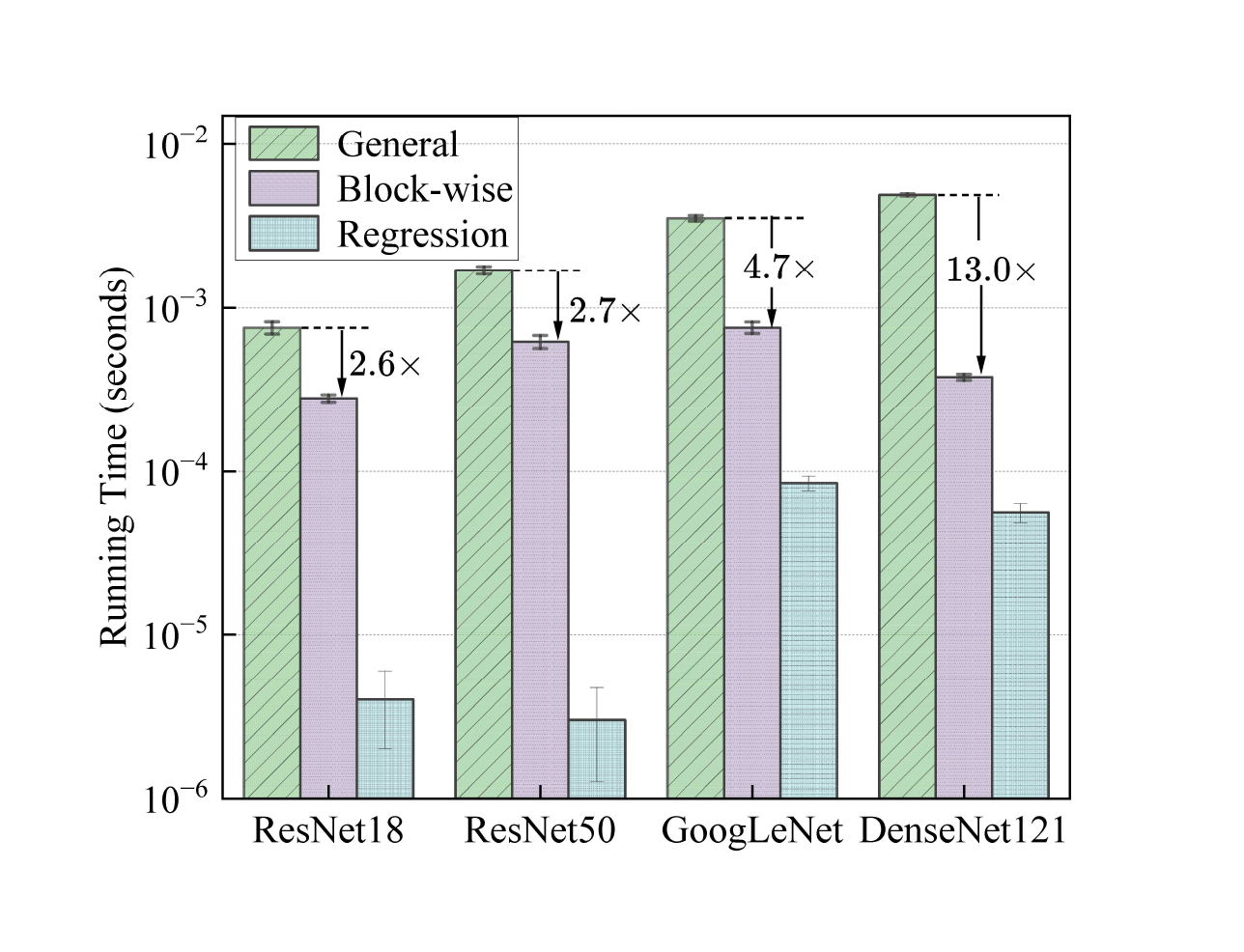} 
        \caption{AI model}
    \end{subfigure}
    \caption{Running time comparison among the proposed algorithms and benchmarks.}
    \label{fig: Running time comparison among the proposed algorithms and benchmarks}
\end{figure}

\begin{table}[t]
\centering
\caption{{Running Time and Training Delay Comparison.}}
\label{tab: Running time and training delay}
\renewcommand{\arraystretch}{1.2} 
\setlength{\tabcolsep}{4.8pt}
\centering
\begin{tabular}{ccccccc}
\hline
\multicolumn{2}{c}{\multirow{2}{*}{AI Model}} & \multicolumn{3}{c}{Running Time (sec.)} & \multicolumn{2}{c}{\multirow{2}{*}{\makecell{Training Delay \\ Per Iteration (sec.)}}} \\
\cline{3-5} 
\multicolumn{2}{c}{}                       & General                    &      & Block-wise        & \multicolumn{2}{c}{} \\
\hline
ResNet18    &                               & $7.58 \times 10^{-4}$     &      & $2.81 \times 10^{-4}$    & \multicolumn{2}{c}{89.28}   \\
ResNet50    &                               & $1.70 \times 10^{-3}$      &      & $6.22 \times 10^{-4}$    & \multicolumn{2}{c}{150.64}   \\
GoogLeNet   &                               & $3.53 \times 10^{-3}$     &      & $7.58 \times 10^{-4}$    & \multicolumn{2}{c}{66.44}   \\
DenseNet121 &                               & $4.91 \times 10^{-3}$     &      & $3.79 \times 10^{-4}$    & \multicolumn{2}{c}{78.90}   \\
\hline
\end{tabular}
\end{table}

\subsection{Proposed Scheme Performance in Dynamic Edge Networks}

\subsubsection{Simulation Setup} We evaluate the proposed algorithms using a custom-designed edge network simulator. The simulator's key components are listed below.

\textit{Network Topology:} The proposed solution is tested in two representative edge network environments: mmWave and sub-6 GHz networks. Each edge network consists of a single base station equipped with a server and connected to 20 mobile devices via wireless links. For wireless data transmission, the n257 band is used in the mmWave network, and the n1 band is used in the sub-6 GHz network, following 3GPP specifications~\cite{3gpp2024user, hakim2024incident}.

\begin{figure}[t] 
	\renewcommand{\figurename}{Fig.}
	\centering
	\includegraphics[width=0.40\textwidth]{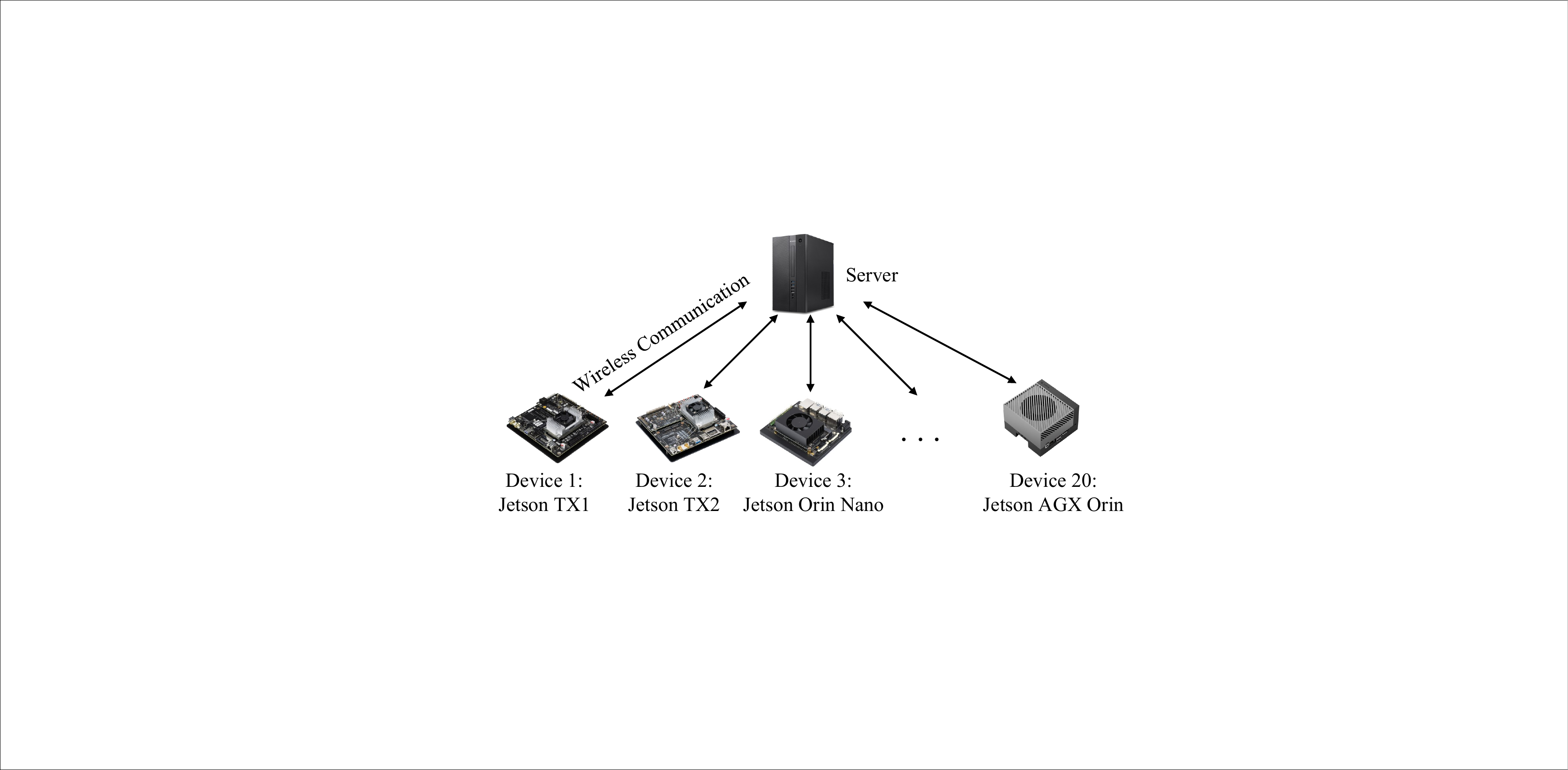}
	\caption{Prototype system of SL.}
	\label{fig: Platform}
\end{figure}

\begin{table*}[h]
\centering
    \caption{Training Delay Comparison among Different Schemes.}
\label{tab: Performance comparison}
\centering
\begin{tabular}{cclcclccl}
\hline
\multirow{3}{*}{Model} &
  \multirow{3}{*}{Method} &
   &
  \multicolumn{5}{c}{Training Delay (minutes)} &
   \\ \cline{3-9} 
                                      &                &  & \multicolumn{2}{c}{CIFAR-10} &  & \multicolumn{2}{c}{CIFAR-100} &  \\ \cline{4-5} \cline{7-8}
 &
   &
   &
  \begin{tabular}[c]{@{}c@{}}IID\\ (Accuracy $95 \%$)\end{tabular} &
  \begin{tabular}[c]{@{}c@{}}non-IID\\ (Accuracy $95\%$)\end{tabular} &
   &
  \multicolumn{1}{c}{\begin{tabular}[c]{@{}c@{}}IID\\ (Accuracy $79 \pm 1 \%$)\end{tabular}} &
  \multicolumn{1}{c}{\begin{tabular}[c]{@{}c@{}}non-IID\\ (Accuracy $78 \pm 1 \%$)\end{tabular}} &
   \\ \hline
\multirow{5}{*}{GoogLeNet}   & OSS &  & 2,872.04 (\textbf{1.33$\times$})          & 3,839.34 (\textbf{1.33$\times$})          &  & 3,201.44 (\textbf{1.29$\times$})           &  3,601.12 (\textbf{1.29$\times$})                 &  \\
                                      & Device-only    &  & 3,265.28 (\textbf{1.51$\times$})          & 4,367.28 (\textbf{1.51$\times$})          &  & 3,851.60 (\textbf{1.55$\times$})            &  4,105.14 (\textbf{1.54$\times$})                 &  \\
                                      & Regression    &  & 3,489.76 (\textbf{1.61$\times$})          & 4,737.32 (\textbf{1.64$\times$})          &  & 3,911.03 (\textbf{1.57$\times$})           &  4,390.73 (\textbf{1.65$\times$})                &  \\
                                      & Proposed       &  & 2,164.36          & 2,891.81          &  & 2,489.61            &  2,660.29                 &  \\ \hline
                                      
\multirow{5}{*}{ResNet18}  & OSS &  & 2,905.48 (\textbf{1.28$\times$})            & 3,272.18 (\textbf{1.29$\times$})      &  & 3,127.26 (\textbf{1.25$\times$})           & 3,327.46 (\textbf{1.34$\times$})        &  \\
                                      & Device-only   &  & 3,331.11 (\textbf{1.47$\times$})         & 3,640.04 (\textbf{1.43$\times$})      &  & 3,407.30 (\textbf{1.36$\times$})          & 3,361.40 (\textbf{1.35$\times$})            &  \\
                                      & Regression    &  & 3,201.30 (\textbf{1.41$\times$})         & 3,502.76 (\textbf{1.38$\times$})         &  & 3,440.95  (\textbf{1.38$\times$})          & 3,318.70 (\textbf{1.34$\times$})              &  \\
                                      & Proposed       &  & 2,265.09          & 2,540.26       &  & 2,499.24           & 2,484.79          &  \\ \hline
                                      
\multirow{5}{*}{ResNet50}             & OSS             &  & 3,242.20 (\textbf{1.33$\times$})         & 4,208.29 (\textbf{1.16$\times$})         &  &  2,790.26 (\textbf{1.28$\times$})            & 3,842.83  (\textbf{1.36$\times$})               &  \\
                                      & Device-only    &  & 3,685.03 (\textbf{1.51$\times$})         & 5,586.54 (\textbf{1.56$\times$})         &  & 3,187.27 (\textbf{1.46$\times$})            & 4,057.17 (\textbf{1.44$\times$})               &  \\
                                      & Regression    &  & 3,639.25 (\textbf{1.49$\times$})         & 5,490.38 (\textbf{1.52$\times$})         &  & 3,113.14 (\textbf{1.43$\times$})           & 4,038.32 (\textbf{1.43$\times$})               &  \\
                                      & Proposed       &  & 2,445.16         & 3,611.98          &  & 2,179.92              & 2,819.09                  &  \\ \hline
                                      
\multirow{5}{*}{DenseNet121}   & OSS &  & 3,717.68 (\textbf{1.16$\times$})         & 4,215.98 (\textbf{1.15$\times$})         &  & 3,002.23  (\textbf{1.28$\times$})        &  4,300.22 (\textbf{1.33$\times$})               &  \\
                                      & Device-only    &  & 4,940.63 (\textbf{1.55$\times$})        & 5,744.91 (\textbf{1.57$\times$})         &  & 3,420.54 (\textbf{1.46$\times$})           & 5,021.96 (\textbf{1.55$\times$})              &  \\
                                      & Regression    &  & 4,921.35 (\textbf{1.54$\times$})          & 5,702.53 (\textbf{1.56$\times$})          &  & 3,374.06 (\textbf{1.44$\times$})            & 4,954.29 (\textbf{1.53$\times$})                &  \\
                                      & Proposed       &  & 3,196.94         & 3,652.77          &  & 2,341.60          & 3,232.60                &  \\ \hline
\end{tabular}
\end{table*}

\textit{Device Mobility:} These devices, located within the base station's coverage area, request access and begin the device-side model training task at the start of each experimental trial.
Each device continues to run the task while moving along a predefined trajectory at 30 \textit{km/h}. The link bitrate is converted by the new radio channel quality indicator to the modulation and coding scheme mapping table~\cite{3gpp2022phsical}. 
Although the channel and link scheme does not account for complex propagation effects, it effectively simulates real-world mobile cellular dynamics, making it suitable for testing model partition. {At each training round, the edge device closest to the base station is selected to participate in the task. To maintain fairness among devices, once a device is selected within an epoch, it cannot be selected again during the same epoch.}

\textit{Channel Condition:} Taking directional antenna gain into account, the transmit power is calculated as $P = P_e - 10 \log_{10} N$, where $P_e$ is the average effective isotropically radiated power (EIRP), and $N$ is the number of beams. The server's average EIRP is fixed at 40 dBm for sub-6 GHz networks and 50 dBm for mmWave networks~\cite{wang2025Performance}. The number of beams is set to 16 for sub-6 GHz networks and 64 for mmWave networks. The large-scale path loss is defined as 
\begin{equation}\label{equ: path loss}
 \begin{split}
   PL(dB) = 32.5 + 20 \log_{10}(f) + 10\eta \log_{10}(d) + \chi,
 \end{split}
\end{equation}
where $f$, $\eta$, $d$, and $\chi$ represent the carrier frequency, path-loss exponent, transmission distance, and shadow fading, respectively. The shadow fading follows $\mathcal{N}(0, \sigma^2)$. {Furthermore, a small-scale fading is incorporated into the channel. Specifically, the effective path loss under Rayleigh fading is given by
\begin{equation}
    PL_{\text{small}}(dB) = PL(dB) - 10\log_{10}(\psi),
\end{equation}
where $\psi$ denotes an independent random channel fading factor following an exponential distribution with unit mean~\cite{huang2020deep}.
}
In this paper, the channel condition is set to three states: \textit{Good}, \textit{Normal}, and \textit{Poor}, corresponding to $\sigma$ values of 2 dB, 4 dB, and 6 dB, respectively. 

{In practical wireless systems, the base station continuously collects channel quality information (e.g., channel quality indicators (CQI) and buffer status reports (BSR)) as part of standard radio resource management procedures. Our method leverages these existing measurements to estimate the uplink and downlink transmission rates, without requiring any additional signaling mechanisms. According to measurements in the 5G radio access network (RAN), the average packet round-trip time (RTT), which reflects the latency of channel estimation and link adaptation, is approximately 9.23~ms~\cite{xu2025bridging}. This indicates that the communication overhead for collecting network state information is relatively small.}

\textit{Computation Delay Profiles:} To accurately measure computation delay,  develop a standalone Python-based latency profiling tool for \textit{PyTorch}. This tool traverses the AI model and registers forward and backward hooks on each layer, following the method introduced in~\cite{wang2022hivemind}.  These hooks record per-layer processing times during training, enabling fine-grained profiling of training delays. For communication delay estimation, we use PyTorch’s built-in \texttt{torchstat} module to record each layer’s parameter size and output size. All training and profiling experiments are performed on graphics processing units (GPUs). 

Our prototype system is illustrated in Fig.~\ref{fig: Platform}, consisting of a server and 20 devices configured as follows:  (\textit{i}) a personal computer (PC) with $1 \times$ Nvidia RTX A6000 GPU; (\textit{ii}) five Jetson TX1, each with 256-core Nvidia Maxwell GPU; (\textit{iii}) five Jetson TX2, each with 256-core Nvidia Pascal GPU; (\textit{iv}) five Jetson Orin Nano, each with 1024-core Nvidia Ampere architecture GPU; and (\textit{iv}) five Jetson AGX Orin, each with 2048-core Nvidia Ampere architecture GPU. Through the prototype system, we measure training delay profiles on both the server and heterogeneous mobile devices, demonstrating the system’s ability to adapt to real-world variations in device capabilities. The performance of the proposed solution is compared with three benchmarks:
\begin{itemize}
\item \textbf{Optimal static partition (OSS)}~\cite{xu2023accelerating}, which refers to the static cut that minimizes training latency among all possible fixed cuts.
\item \textbf{Device-only}, where the entire AI model is trained on the device, while the server is only responsible for data transmission.
\item \textbf{Regression}~\cite{wen2025training}, as mentioned in the previous subsection.
\end{itemize}

\subsubsection{Impact of Channel Conditions} To evaluate the performance of the proposed solution under dynamic wireless environments, we train GoogLeNet over 300 simulation runs across three channel conditions.

Figure~\ref{fig: delay with three channels} illustrates the model training delay per epoch for each method under both sub-6 GHz and mmWave networks. The regression method performs poorly due to its suboptimal model partition. In contrast, the proposed solution consistently achieves the lowest training delay across all channel conditions. In the sub-6 GHz network, the proposed solution reduces training delay by 11.40\%, 12.38\%, and 19.34\% compared to the OSS, device-only, and regression methods, respectively. In the mmWave network, the proposed solution also outperforms all benchmarks, achieving reductions in training delay of 27.38\%, 36.09\%, and 38.61\%, respectively. These results demonstrate that the proposed solution can rapidly determine the optimal model partition under real-time network conditions, particularly in highly dynamic environments.

{Figure~\ref{fig: delay_mmWave_Rayleigh} shows the performance under Rayleigh fading channels in the mmWave network. As we can see, the training delay per epoch of the OSS method is significantly affected by Rayleigh fading, exhibiting large fluctuations. In contrast, the proposed solution remains relatively stable under the same conditions. This is because the transmission rate under Rayleigh fading varies much more dramatically than that under large-scale path loss. Since the OSS method uses a fixed cut, it cannot adapt to rapid channel variations, leading to substantial fluctuations in transmission delay. In contrast, the proposed solution dynamically adjusts the cut according to channel conditions, thereby maintaining a more stable training delay per epoch even in the presence of Rayleigh fading. Furthermore, under Rayleigh fading channels, the proposed solution consistently outperforms all baselines across different channel conditions.}

\begin{figure}[t]
    \centering
    \renewcommand{\thesubfigure}{(\alph{subfigure})}
    \begin{subfigure}[b]{0.24\textwidth}
        \includegraphics[width=\textwidth]{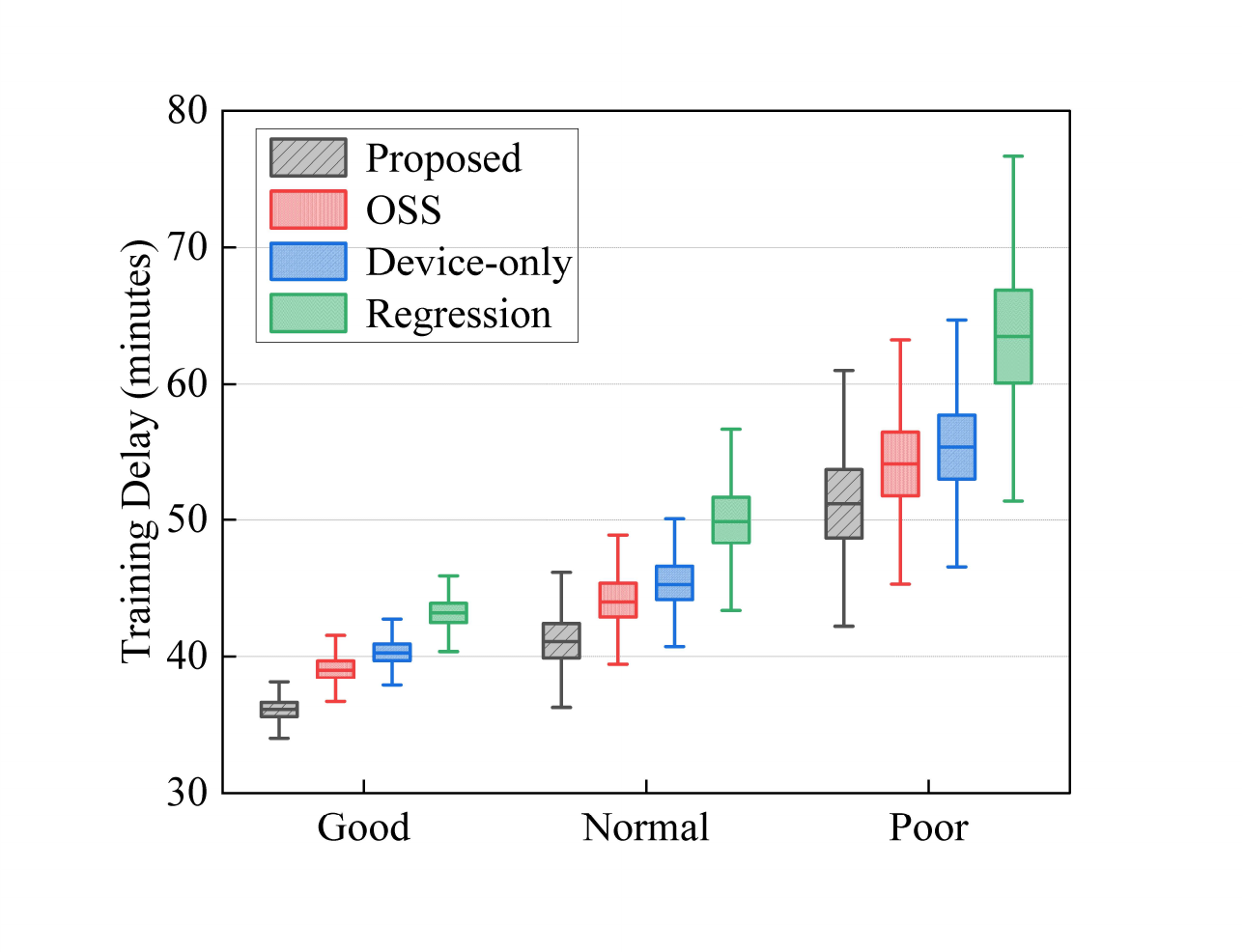} 
        \caption{Sub-6 GHz network}
    \end{subfigure}
    \hfill
    \begin{subfigure}[b]{0.24\textwidth}
        \includegraphics[width=\textwidth]{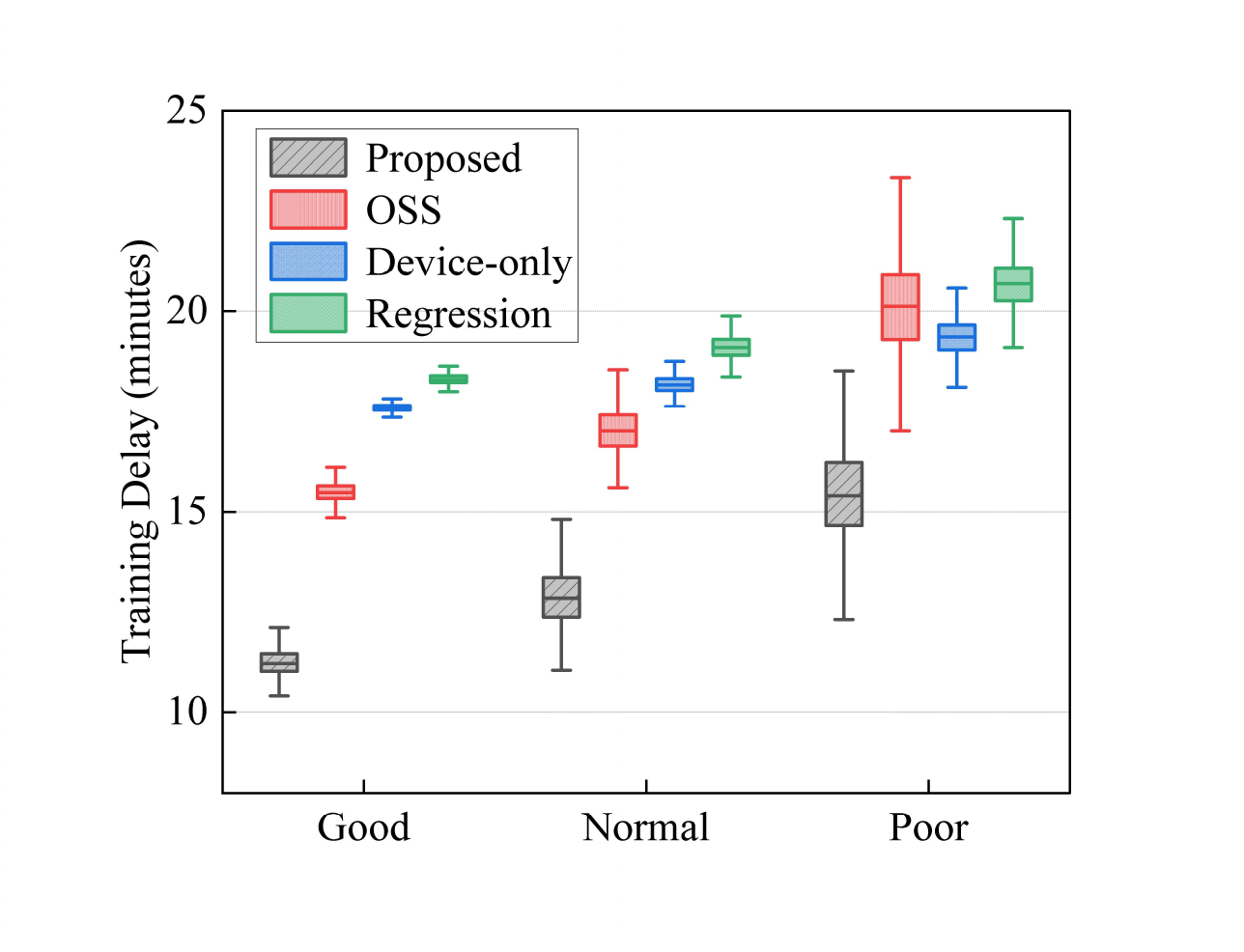} 
        \caption{mmWave network}
    \end{subfigure}
    \caption{{Training delay per epoch of the proposed solution and benchmarks under large-scale path loss.}}
    \label{fig: delay with three channels}
\end{figure}

\begin{figure}[t] 
	\renewcommand{\figurename}{Fig.}
	\centering
	\includegraphics[width=0.25\textwidth]{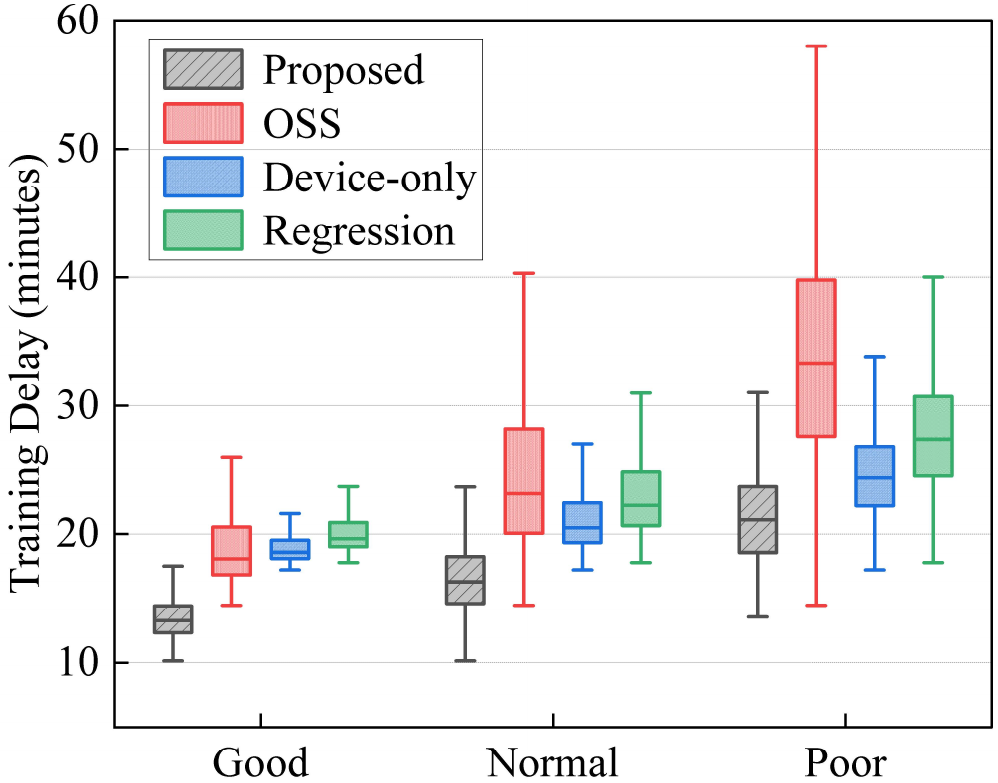}
	\caption{{Training delay per epoch of the proposed solution and benchmarks in the mmWave network under a Rayleigh fading channel.}}
	\label{fig: delay_mmWave_Rayleigh}
\end{figure}

\subsubsection{Impact of  Data Distribution} To evaluate the performance of the proposed solution under different data distributions, we train GoogLeNet and compare it with four benchmarks: \textit{central}, which runs the entire model on the server, \textit{OSS}, \textit{device-only}, and \textit{regression}. The experiments are conducted in the mmWave network with normal channel conditions. Two types of data distributions are considered, i.e., independent and identically distributed (IID) and non-IID. In the IID setting, each device receives an equal number of samples from each class in the dataset.
In this experiment, training delay is defined as the time required for the model to reach 95\% accuracy under each algorithm.
As shown in Fig.~\ref{fig: accuracy in iid and non-iid}(a), the proposed solution reduces the training delay by 37.96\%, 26.22\%, and 24.62\% compared to the regression, device-only, and OSS benchmarks, respectively.

In the non-IID dataset, we employ the method described in \cite{feng2022mobility} to synthesize data for different devices. Specifically, each device samples data from a subset of categories, where the category proportions $\mathcal{Q} = {q_1, q_2, ..., q_M}$ follow a Dirichlet distribution, i.e., $\mathcal{Q} \sim Dir(\gamma p)$. Here, $p$ denotes the label distribution, and $\gamma$ controls the degree of data heterogeneity across devices. In this experiment, we set $\gamma = 0.5$. As shown in Fig.~\ref{fig: accuracy in iid and non-iid}(b), the proposed solution reduces the training delay by 38.95\%, 33.79\%, and 24.68\% compared with the regression, device-only, and OSS benchmarks, respectively. {In the device-only method, although only a single transmission is required after completing the full local iteration to upload the model parameters to the server, the overall training delay remains relatively high. This is because mobile devices typically have limited computational capability, leading to longer local training delay compared to the proposed solution. Moreover, the size of the transmitted model parameters is larger than that of the smashed data at the optimal cut determined by the proposed solution. As a result, both computation and communication delays are increased in the device-only approach.}
This demonstrates that under non-IID data conditions, the proposed solution offers greater advantages, highlighting its robustness to heterogeneous data distributions.

\begin{figure}[t]
    \centering
    \renewcommand{\thesubfigure}{(\alph{subfigure})}
    \begin{subfigure}[b]{0.24\textwidth}        \includegraphics[width=\textwidth]{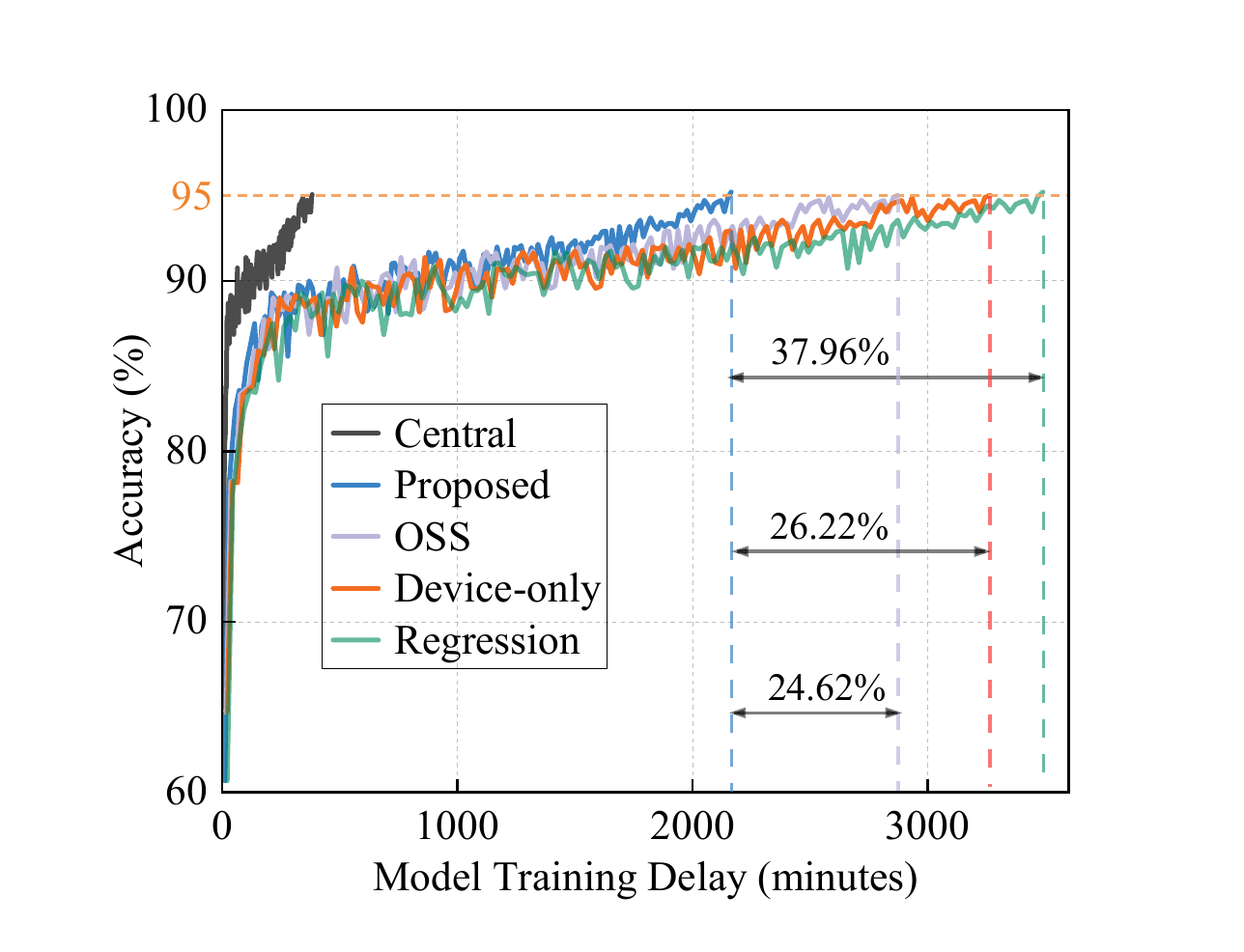} 
        \caption{IID}
    \end{subfigure}
    \hfill
    \begin{subfigure}[b]{0.24\textwidth}
        \includegraphics[width=\textwidth]{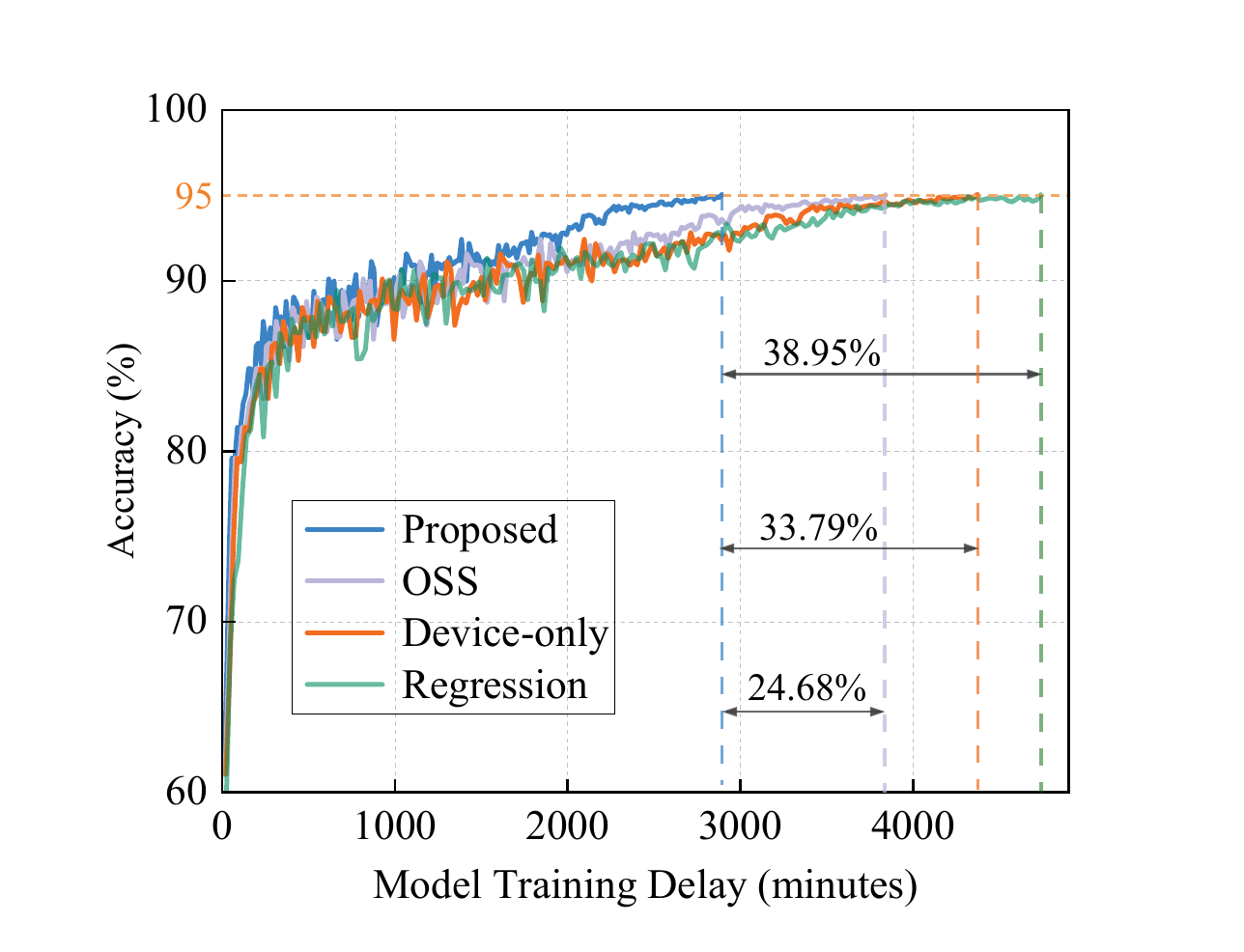} 
        \caption{non-IID}
    \end{subfigure}
    \caption{Overall training delay comparison among the proposed solution and benchmarks when training GoogLeNet.}
    \label{fig: accuracy in iid and non-iid}
\end{figure}

\subsubsection{Impact of AI Models}  All models are trained over the mmWave network with normal channel conditions.
To ensure a fair comparison, all schemes are required to achieve their predefined accuracy thresholds using the same datasets, data distributions, and AI models. These thresholds may vary slightly across models due to inherent differences in their performance. For example, on the CIFAR-100 dataset with a non-IID distribution and ResNet18, the accuracy threshold is set to 77\%, whereas with ResNet50 under the same conditions, the threshold is 78\%. 

The overall training delay for each method to reach its accuracy threshold is summarized in Table~\ref{tab: Performance comparison}. The proposed solution substantially reduces training delay compared with benchmarks across all AI models, datasets, and data distributions. For example, when training DenseNet121 on the CIFAR-100 dataset with a non-IID data distribution, the proposed solution reduces the overall training delays by 1.33×, 1.55×, and 1.53× compared with the OSS, device-only, and regression methods, respectively. {In addition, we further evaluate the performance on GPT-2 under the mmWave network with normal channel conditions. GPT-2 is trained on the CARER dataset~\cite{saravia2018carer} with a non-IID data distribution. As illustrated in Fig.~\ref{fig: GPT-2_accuracy_non-iid}, the proposed solution reduces the overall training delay by 8.62\%, 23.48\%, and 73.42\% compared with the OSS, regression, and device-only methods, respectively.}
These performance gains are primarily due to the proposed solution’s ability to select the optimal model partition that minimizes overall training delay, thereby mitigating resource bottlenecks and accelerating the training process.

\begin{figure}[t] 
	\renewcommand{\figurename}{Fig.}
	\centering
	\includegraphics[width=0.25\textwidth]{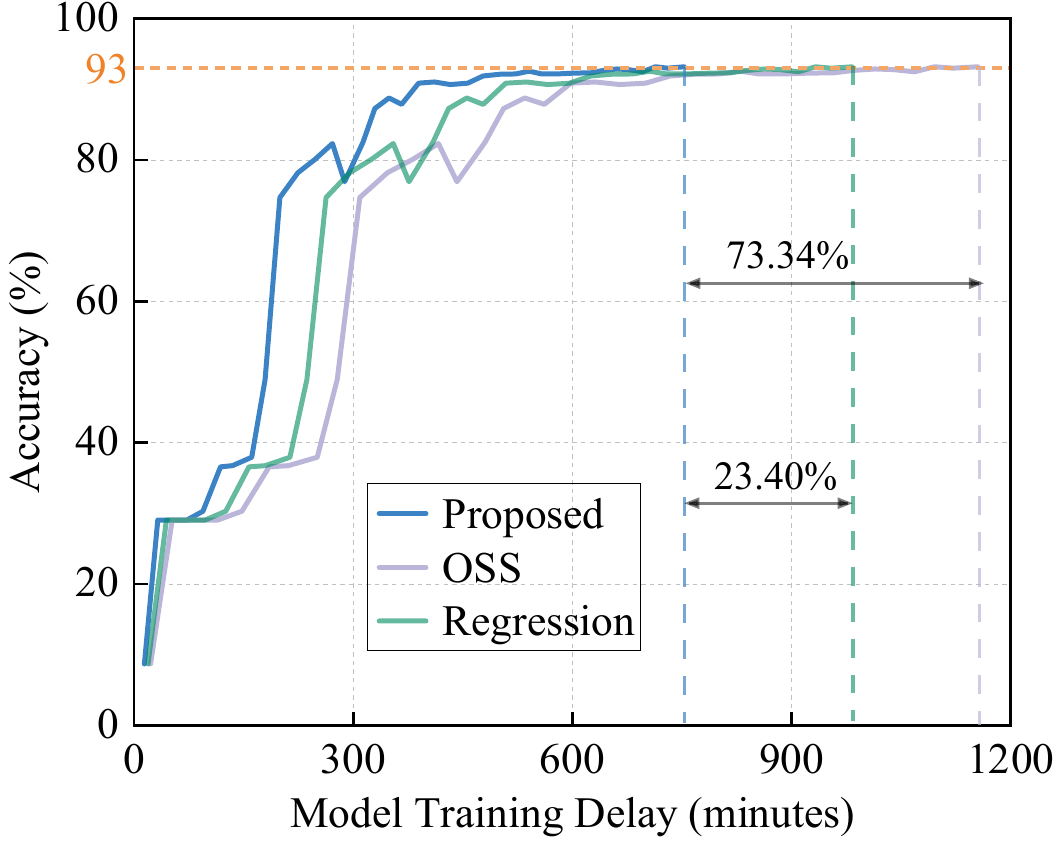}
	\caption{{Overall training delay comparison among the proposed solution and benchmarks when training GPT-2 on the CARER dataset.}}
	\label{fig: GPT-2_accuracy_non-iid}
\end{figure}

\subsubsection{{Impact of Network Sizes}}
{To evaluate the robustness of the proposed approach against network sizes, we measure the training delay under different numbers of devices when training GoogLeNet with a non-IID CIFAR-10 dataset in the mmWave network. As shown in Fig.~\ref{fig: training delay under different numbers of devices}, it can be observed that, when the number of devices is 10 and 40, the proposed scheme consistently outperforms the baseline methods by achieving lower training delay. Specifically, the proposed solution reduces the training delay by at least 25.68\% and 23.46\% compared with baselines when the number of devices is 10 and 40, respectively.
Despite the increase in the number of devices, the proposed approach maintains fast convergence, demonstrating its robustness against network sizes.}
\begin{figure}[t]
    \centering
    \renewcommand{\thesubfigure}{(\alph{subfigure})}
    \begin{subfigure}[b]{0.24\textwidth}        
        \includegraphics[width=\textwidth]{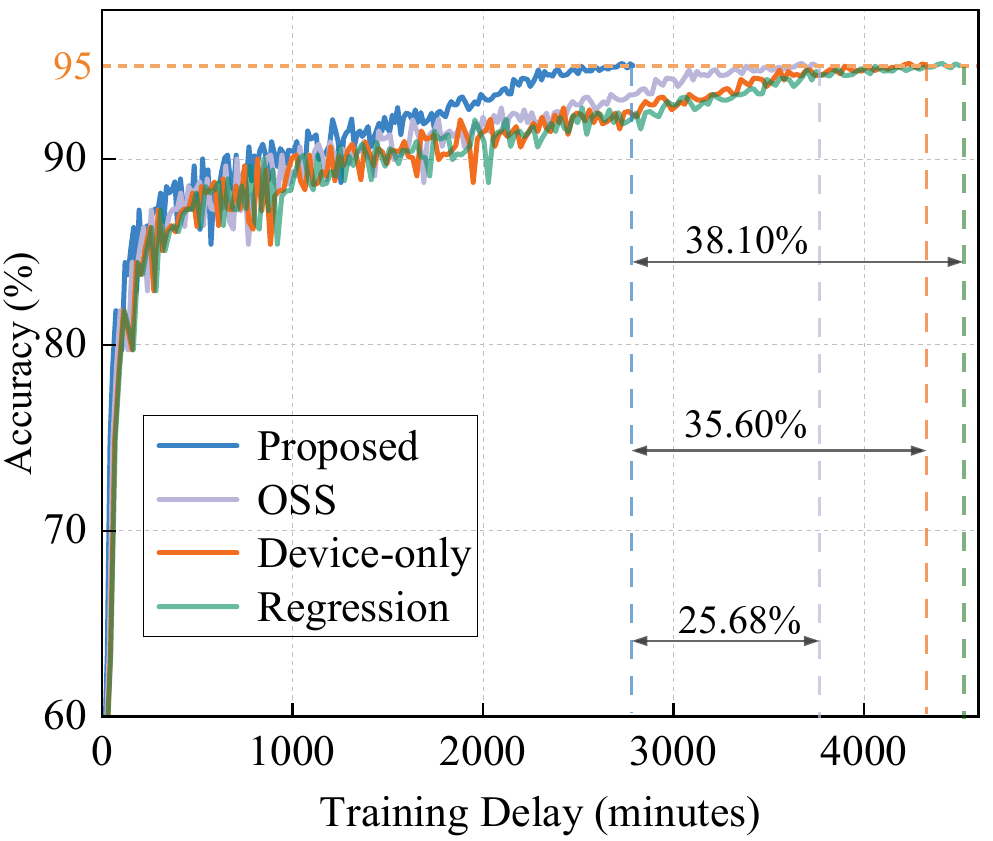} 
        \caption{Training delay (10 devices)}
    \end{subfigure}
    \hfill
    \begin{subfigure}[b]{0.24\textwidth}
        \includegraphics[width=\textwidth]{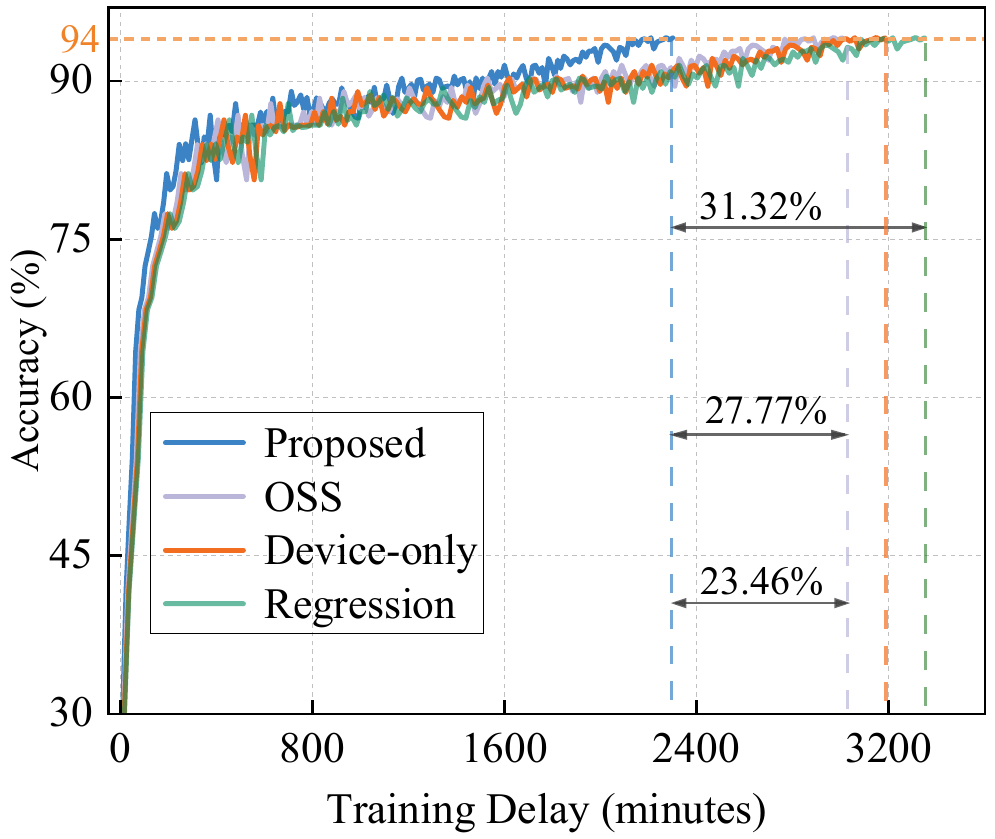} 
        \caption{Training delay (40 devices)}
    \end{subfigure}
    \caption{{Overall training delay comparison under different numbers of devices when training GoogLeNet with a non-IID CIFAR-10 dataset in the mmWave network.}}
    \label{fig: training delay under different numbers of devices}
\end{figure}

\subsubsection{{Communication Overhead}}
{To clarify the communication overhead introduced by model splitting, we compare the device-side computing delay, server-side computing delay, and transmission delay under different methods. Specifically, the results are obtained when training GoogLeNet over the mmWave network with a batch size of 32, where the reported delays correspond to the total computation and transmission time for completing two iterations jointly executed by the device and the server. As shown in Fig.~\ref{fig: GoogLeNet_comm_comp_delay}, the proposed solution reduces the overall training delay by 23.40\% and 73.34\% compared with the regression and OSS methods, respectively. This improvement is attributed to the optimal cut selected by the proposed solution, which effectively reduces the size of the intermediate data transmitted between the device and the server.
In addition, although the device-only method incurs lower transmission delay than the proposed scheme, it suffers from significantly higher device-side computation delay. As a result, its overall training delay is still larger than that of the proposed scheme.}

\begin{figure}[t] 
	\renewcommand{\figurename}{Fig.}
	\centering
	\includegraphics[width=0.3\textwidth]{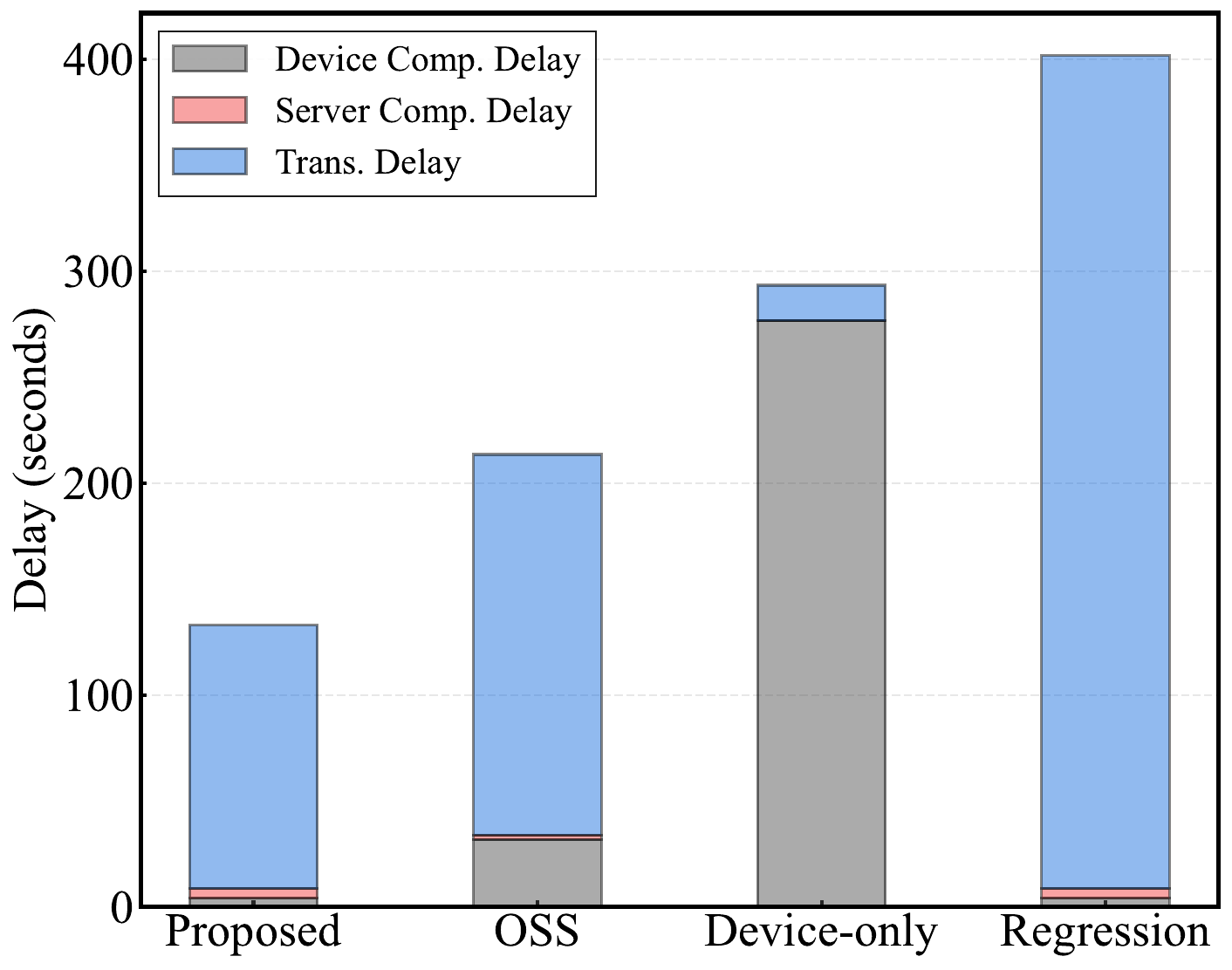}
	\caption{{Computing delay and transmission delay comparison.}}
	\label{fig: GoogLeNet_comm_comp_delay}
\end{figure}

\section{Conclusion}\label{sec: conclusion}
In this paper, we have investigated the optimal model partitioning problem to minimize the training delay of SL in edge networks. We have represented an arbitrary AI model as a DAG and proposed a general model partitioning algorithm to determine the optimal model partition. For AI models with block structures, we have proposed a low-complexity block-wise partitioning algorithm to efficiently determine the optimal model partition. Extensive experimental results have validated that the proposed scheme can find the optimal model partition within milliseconds. Due to fast and accurate identification of the optimal model partition, the proposed solution can be applied to facilitate AI model training in edge networks with a large number of resource-constrained mobile devices. For future work, we will investigate the multi-partition problem of SL in edge networks.

\section*{Appendix A\\ Proof of Theorem 1}
\renewcommand{\theequation}{A.\arabic{equation}}
\setcounter{equation}{0} 
\setcounter{figure}{0}
\renewcommand{\thefigure}{A.\arabic{figure}}

As shown in Fig.~\ref{fig: cuts_cases in AI model}, all possible cuts in an AI model can be categorized into three cases:
\textit{(i) }the cut intersects all outgoing edges originating from a parent vertex, e.g., case 1 in Fig.~\ref{fig: cuts_cases in AI model}(a);
\textit{(ii)} the cut intersects only some of the outgoing edges from a parent vertex, e.g., case 2 in Fig.~\ref{fig: cuts_cases in AI model}(b);
and \textit{(iii)} the cut does not intersect any outgoing edges from a parent vertex, e.g., case 3 in Fig.~\ref{fig: cuts_cases in AI model}(c).
Let $T(\cdot)$ denote the model training delay in the original AI model, calculated as the sum of computation and transmission delays. Let $T_{\mathcal{G}'}(\cdot)$ denote the value of the \textit{s-t} cut in DAG $\mathcal{G}'$, calculated as the sum of the weights of all edges intersected by this cut.
The corresponding training delay under each case is analyzed as follows.

\emph{Case 1:} The model training delay in the AI model is equal to the value of the cut crossing the edge from the auxiliary vertex to its original vertex in $\mathcal{G}'$. For example, the value of cut ${c}_{1}$ in Fig.~\ref{fig: cuts_cases in AI model}(a) is the same as that of cut ${c}_{1'}$ in Fig.~\ref{fig: cuts_cases in DAG-2}(a), i.e., $T(c_1) =  T_{\mathcal{G}'}(c_{1'})$. In addition, the original cut in $\mathcal{G}'$ (e.g., cut ${c}_{1}$ in Fig.~\ref{fig: cuts_cases in DAG-2}(a)) is impossible to be the minimum \textit{s-t} cut, because its value is more than the value of the cut crossing the edge between the auxiliary vertex and its original vertex (e.g., $T_{\mathcal{G}'}({c}_{1'}) < T_{\mathcal{G}'}({c}_{1}) $ in Fig.~\ref{fig: cuts_cases in DAG-2}(a)). In cut ${c}_1$ of Fig.~\ref{fig: cuts_cases in DAG-2}(a), the weights of outgoing edges from vertex $v_1$ are counted three times, while the weight of the outgoing edge from vertex $v_{1'}$ is counted only once in cut ${c}_{1'}$. This leads to $T_{\mathcal{G}'}({c}_{1'}) < T_{\mathcal{G}'}({c}_{1})$. Hence, this case does not require further analysis.

\emph{Case 2:} If the server’s computing capability is not weaker than that of the device, the corresponding model training delay in the AI model cannot be minimal, and the cut in this case cannot be the minimum \textit{s-t} cut in $\mathcal{G}'$.
We first analyze the cut in the AI model. Compared with \textit{case 1}, the model training delay in \textit{case 2} is strictly larger. For instance, as shown in Fig.~\ref{fig: cuts_cases in AI model}(a) and Fig.~\ref{fig: cuts_cases in AI model}(b), the difference in delay between cut $c_2$ and cut $c_1$ is given by
\begin{equation}\label{equ: case 2 - case 1}
  \begin{split}
    T(c_2) - T(c_1) = & N_{loc} \bigg( \xi_{D, v_4} - \xi_{S, v_4} + \frac{a_{v_4}}{R_D} \\
    &+ \frac{a_{v_4}}{R_S} \bigg)  + \frac{k_{v_4}}{R_D} + \frac{k_{v_4}}{R_S}.
  \end{split}
\end{equation}

\begin{figure}[t] 
	\renewcommand{\figurename}{Fig.}
	\centering
	\includegraphics[width=0.22\textwidth]{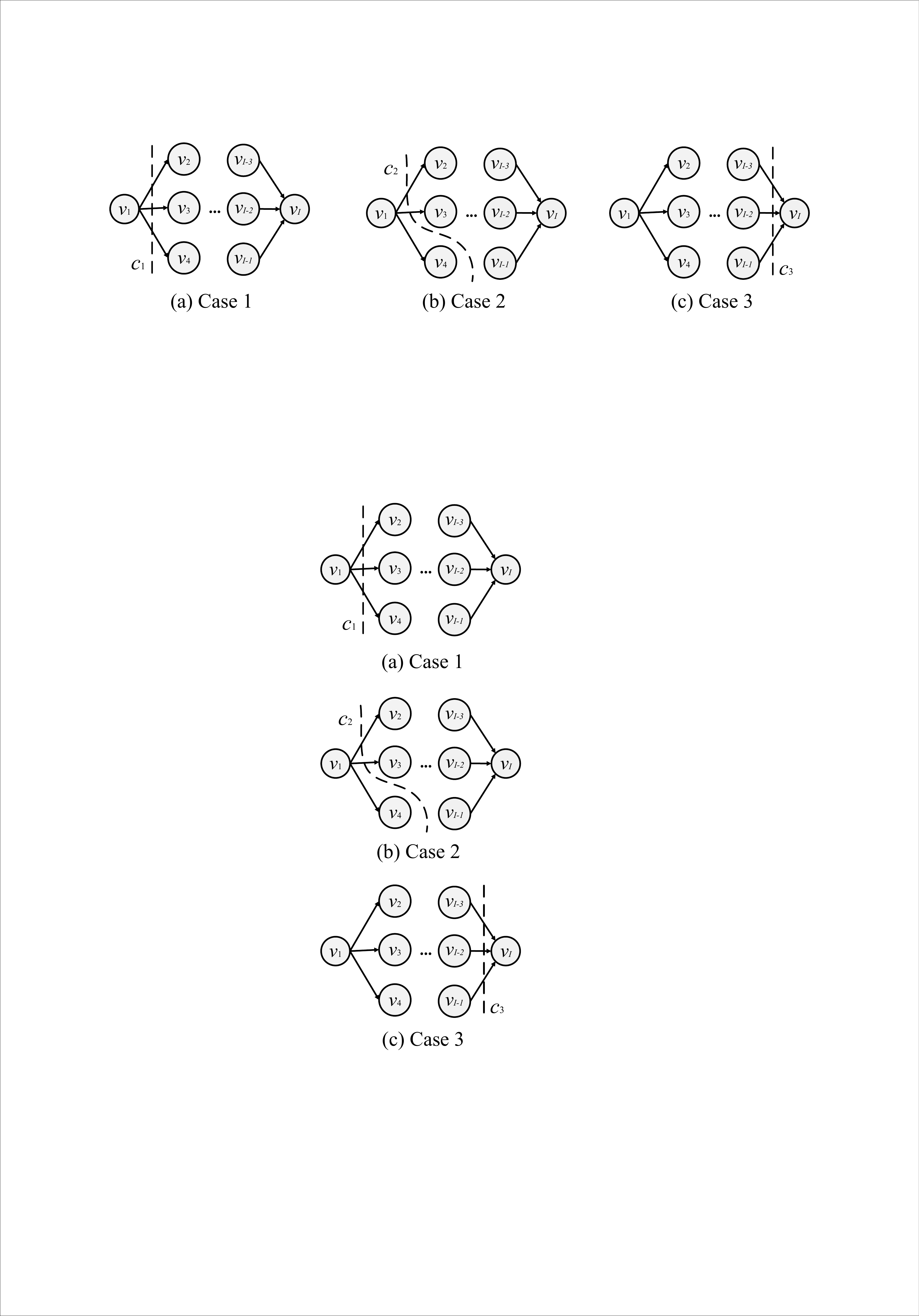}
	\caption{{Three cut cases in the AI model, where each cut’s value is the total training delay.} }
	\label{fig: cuts_cases in AI model}
\end{figure}

Under Assumption 1, we have $\xi_{D, v_4} - \xi_{S, v_4} \ge 0$, which leads to $T(c_2) - T(c_1) > 0$. Therefore, the cut in \textit{case 2} cannot be the optimal model partition in the AI model.

\begin{figure*}[t] 
	\renewcommand{\figurename}{Fig.}
	\centering
	\includegraphics[width=0.90\textwidth]{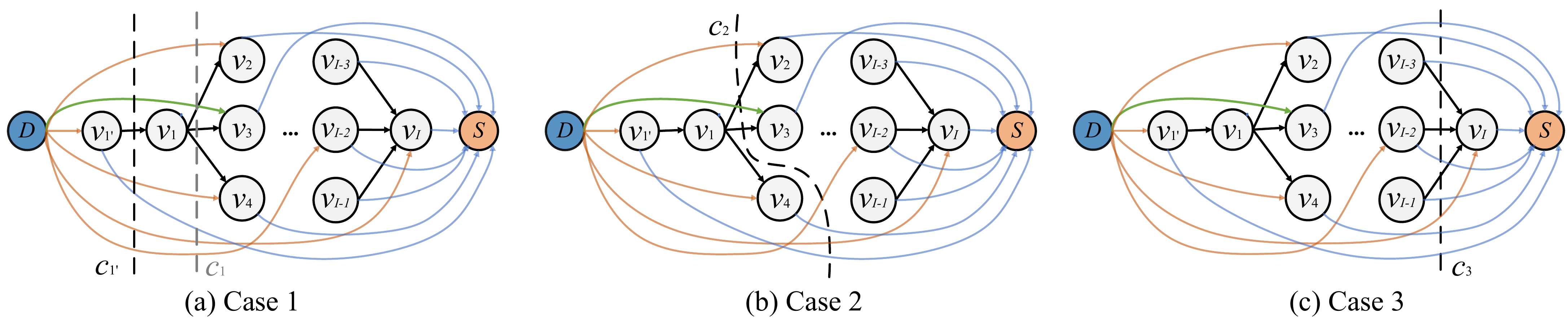}
	\caption{{Three cut cases in the DAG, where each cut’s value is the sum of the weights of the edges that the cut intersects.}}
	\label{fig: cuts_cases in DAG-2}
\end{figure*}

We then examine the corresponding cut in the DAG $\mathcal{G}'$ and find that the same conclusion holds, i.e., the cut in \textit{case 2} is not the minimum \textit{s-t} cut in $\mathcal{G}'$. For example, as illustrated in Fig.~\ref{fig: cuts_cases in DAG-2}(a) and Fig.~\ref{fig: cuts_cases in DAG-2}(b), the difference in cut values is
 \begin{equation}\label{equ: case 2 - case 1 in DAG}
  \begin{split}
    T_{\mathcal{G}'}(c_2) - T_{\mathcal{G}'}(c_1) = N_{loc} \bigg( \xi_{D, v_4} - \xi_{S, v_4} + \frac{a_{v_1}}{R_D} \\ + \frac{a_{v_1}}{R_S} +  \frac{a_{v_4}}{R_D}  + \frac{a_{v_4}}{R_S} \bigg) 
     + \frac{k_{v_4}}{R_D} + \frac{k_{v_4}}{R_S}.
  \end{split}
 \end{equation}
Given Assumption 1, we again have $T_{\mathcal{G}'}(c_2) - T_{\mathcal{G}'}(c_1) > 0$, confirming that the cut in \textit{case 2} cannot be the minimum \textit{s-t} cut in $\mathcal{G}'$.

\emph{Case 3:} The model training delay in the AI model is precisely equal to the value of the corresponding cut in $\mathcal{G}'$. For example, the value of cut $c_3$ in Fig.~\ref{fig: cuts_cases in AI model}(c) matches that of cut $c_3$ in Fig.~\ref{fig: cuts_cases in DAG-2}(c), i.e., $T(c_3) =  T_{\mathcal{G}'}(c_{3})$. Since this case exhibits a one-to-one correspondence between the training delay and the cut value, no further analysis is necessary.

In summary, based on the analysis of all three cases, we have completed the proof.

\section*{Appendix B\\ Proof of Theorem 2}
\renewcommand{\theequation}{B.\arabic{equation}}
\setcounter{equation}{0} 

{For a block, let $T(c^{\min}_{B}) $ and $T(c^{\text{in}}_{B})$ denote the overall training delay under cuts $c_B^{\min}$ and $c^{\text{in}}_B$, respectively. The difference between them is given by}
\begin{equation}\label{equ: the difference between two cut}
     \begin{split}
        T(c^{\min}_{B}) -T(c^{\text{in}}_{B}) =  N_{loc} \frac{a_B^{\min} - a^{\text{in}}_B}{R_{D}} + N_{loc} \frac{a_B^{\min} - a^{\text{in}}_B}{R_{S}}  + \\  \frac{ \sum_{i \in \mathcal{V}_{D}} k_{v_i}}{R_D} 
         + \frac{\sum_{i \in \mathcal{V}_{D}} k_{v_i}}{R_S} + \sum_{i \in \mathcal{V}_{D}} N_{loc} \left( \xi_{D, v_i} - \xi_{S, v_i} \right).
     \end{split}
\end{equation}
{Here, $\mathcal{V}_{D}$ is the set of layers within the block that are assigned to the edge device after the minimum \textit{s-t} cut, and $\mathcal{V}_{D} = \{v_1, v_2,v_3, v_6\}$ in Fig.~4.  $\xi_{D,v_i}$ and $\xi_{S, v_i}$ are the computation delay of processing layer $v_i$ in edge device and server, resepectively. In the right-hand side of Eq.~(\ref{equ: the difference between two cut}), the first term is the difference in transmission delay for the smashed data. The second term is the difference in the gradient's transmission delay. Note that the gradient size during AI model training is equivalent to the size of the smashed data. The third and fourth terms are the differences in transmission delay for the device-side model uploading and downloading, respectively. The fifth term is the difference in the computation delays across the layers.}

{Since the computing capability of the edge server is typically stronger than that of the edge device under Assumption 1, we have $\xi_{D,v_i} - \xi_{S,v_i} > 0$. Therefore, the last term in Eq.~(\ref{equ: the difference between two cut}) is always positive. In addition, the third and fourth terms in Eq.~(\ref{equ: the difference between two cut}) are also positive. If $a_B^{\min} - a_B^{\text{in}} > 0$, it follows that $T(c^{\min}_{B}) -T(c^{\text{in}}_{B}) > 0$. This implies that the overall training delay under the minimum \textit{s-t} cut $c_B^{\min}$ is larger than that under the cut $c_B^{\text{in}}$. In this case, the minimum \textit{s-t} cut in the block is not optimal, and the optimal cut does not intersect any intra-block edges.}

\bibliographystyle{IEEEtran}
\bibliography{IEEEabrv,main}

@STRING{IEEE_J_MC         = "{IEEE} Trans. Mobile Comput."}

@STRING{IEEE_J_N          = "{IEEE/ACM} Trans. Netw."}

@ARTICLE{dai2024asurvey,
  author={Dai, Yanpeng and Lyu, Ling and Cheng, Nan and Sheng, Min and Liu, Junyu and Wang, Xiucheng and Cui, Shuguang and Cai, Lin and Shen, Xuemin},
  journal={IEEE Trans. Cogn. Commun. Netw.}, 
  title={A Survey of Graph-Based Resource Management in Wireless Networks - Part {II}: Learning Approaches}, 
  year={2024},
  volume={},
  number={},
  pages={},
  note={\url{DOI:10.1109/TCCN.2024.3508777}}
}

@ARTICLE{li2023Throughput,
  author={Li, Jing and Liang, Weifa and Li, Yuchen and Xu, Zichuan and Jia, Xiaohua and Guo, Song},
  journal={IEEE Trans. Mobile Comput.}, 
  title={Throughput Maximization of Delay-Aware {DNN} Inference in Edge Computing by Exploring {DNN} Model Partitioning and Inference Parallelism}, 
  year={2023},
  volume={22},
  number={5},
  pages={3017--3030},
  month = {May},
}

@ARTICLE{lecun1998gradient,
  author={Lecun, Y. and Bottou, L. and Bengio, Y. and Haffner, P.},
  journal={Proc. IEEE}, 
  title={Gradient-based learning applied to document recognition}, 
  year={1998},
  month ={Nov.},
  volume={86},
  number={11},
  pages={2278--2324},
}

@ARTICLE{sun2024An,
  author={Sun, Jianfei and Wu, Cong and Mumtaz, Shahid and Tao, Junyi and Cao, Mingsheng and Wang, Mei and Frascolla, Valerio},
  journal={IEEE J. Sel. Areas Commun.}, 
  title={An Efficient Privacy-Aware Split Learning Framework for Satellite Communications}, 
  year={2024},
  volume={42},
  number={12},
  pages={3355--3365},
  month = {Dec.},
}

@ARTICLE{yan2022optimal,
  author={Yan, Jia and Bi, Suzhi and Zhang, Ying-Jun Angela},
  journal={IEEE Wireless Commun.}, 
  title={Optimal Model Placement and Online Model Splitting for Device-Edge Co-Inference}, 
  year={2022},
  volume={21},
  number={10},
  pages={8354--8367},
  month = {Oct.},
}

@article{shen2021holistic,
  title={Holistic network virtualization and pervasive network intelligence for {6G}},
  author={Shen, Xuemin and Gao, Jie and Wu, Wen and Li, Mushu and Zhou, Conghao and Zhuang, Weihua},
  journal={IEEE Commun. Surveys Tuts.},
  volume={24},
  number={1},
  pages={1--30},
  year={2022},
  month = {Dec.},
  publisher={IEEE}
}

@ARTICLE{liang2023dnn,
  author={Liang, Huanghuang and Sang, Qianlong and Hu, Chuang and Cheng, Dazhao and Zhou, Xiaobo and Wang, Dan and Bao, Wei and Wang, Yu},
  journal={IEEE Trans. Cloud Comput.}, 
  title={{DNN} Surgery: Accelerating {DNN} Inference on the Edge Through Layer Partitioning}, 
  year={2023},
  volume={11},
  number={3},
  month = {Jul.},
  pages={3111-3125},
}

@article{feng2022mobility,
  title={Mobility-aware cluster federated learning in hierarchical wireless networks},
  author={Feng, Chenyuan and Yang, Howard H and Hu, Deshun and Zhao, Zhiwei and Quek, Tony QS and Min, Geyong},
  journal={IEEE Trans. Wireless Commun.},
  volume={21},
  number={10},
  pages={8441--8458},
  year={2022},
month = {Oct.},
  publisher={IEEE}
}

@article{targ2016resnet,
  title={{ResNet in ResNet: Generalizing residual architectures}},
  author={Targ, Sasha and Almeida, Diogo and Lyman, Kevin},
  journal={arXiv:1603.08029},
  year={2016}
}

@article{wen2025training,
  title={Training latency minimization for model-splitting allowed federated edge learning},
  author={Wen, Yao and Zhang, Guopeng and Wang, Kezhi and Yang, Kun},
  journal={IEEE Trans. Netw. Sci. Eng.},
  year={2025},
  month = {May},
  volume={12},
  number={3},
  pages={2081--2092},
}

@ARTICLE{xu2025bridging,
  author={Xu, Dongzhu and Lin, Rui and Zhang, Huanhuan and Zhou, Anfu and Ma, Huadong},
  journal= IEEE_J_N, 
  title={Bridging Cross-Layer Interactions Between {5G RAN and MEC} for Latency-Critical Video Analytics}, 
  year={2025},
  month= {Aug.},
  volume={33},
  number={4},
  pages={1614--1629},
}

@article{tan2018Supporting,
author = {Tan, Zhaowei and Li, Yuanjie and Li, Qianru and Zhang, Zhehui and Li, Zhehan and Lu, Songwu},
title = {Supporting Mobile {VR in LTE} Networks: How Close Are We?},
year = {2018},
month = {Mar.},
journal={Proc. ACM Meas. Anal. Comput.},
volume = {2},
number = {1},
pages={1--31},
}

@inproceedings{NIPS2012_c399862d,
 author = {Krizhevsky, Alex and Sutskever, Ilya and Hinton, Geoffrey E},
 booktitle = {Proc. NeurIPS},
 pages = {1--9},
 title = {ImageNet Classification with Deep Convolutional Neural Networks},
 year = {2012}
}

@inproceedings{stoer1994simple,
  title={A simple min cut algorithm},
  author={Stoer, Mechthild and Wagner, Frank},
  booktitle={Proc. Eur. Symp. Algorithms},
  pages={141--147},
  year={1994},
  month = {Sep.},
}

@inproceedings{huang2018condensenet,
  title={{CondenseNet: An efficient DenseNet using learned group convolutions}},
  author={Huang, Gao and Liu, Shichen and Van der Maaten, Laurens and Weinberger, Kilian Q},
  booktitle={Proc. IEEE CVPR},
  pages={2752--2761},
  year={2018},
  month = {Jun.},
  address={Salt Lake City, Utah, USA},
}

@inproceedings{saravia2018carer,
  title={{CARER}: Contextualized affect representations for emotion recognition},
  author={Saravia, Elvis and Liu, Hsien-Chi Toby and Huang, Yen-Hao and Wu, Junlin and Chen, Yi-Shin},
  booktitle={Proc. EMNLP},
  pages={3687--3697},
  year={2018},
  address = {Brussels, Belgium},
}

@ARTICLE{huang2020deep,
  author={Huang, Liang and Bi, Suzhi and Zhang, Ying-Jun Angela},
  journal = IEEE_J_MC, 
  title={Deep Reinforcement Learning for Online Computation Offloading in Wireless Powered Mobile-Edge Computing Networks}, 
  year={2020},
  month = {Nov.},
  volume={19},
  number={11},
  pages={2581-2593},
}

@ARTICLE{Khaled2022edge,
  author={Letaief, Khaled B. and Shi, Yuanming and Lu, Jianmin and Lu, Jianhua},
  journal={IEEE J. Sel. Areas Commun.}, 
  title={Edge Artificial Intelligence for {6G}: Vision, Enabling Technologies, and Applications}, 
  year={2022},
month = {Jan.},
  volume={40},
  number={1},
  pages={5-36}
}

@ARTICLE{wang2025Performance,
  author={Wang, Yunbai and Chen, Chen and Chu, Xiaoli},
  journal={IEEE Trans. Wireless Commun.}, 
  title={Performance Analysis for Hybrid {Sub-6GHz-mmWave-THz} Networks with Downlink and Uplink Decoupled Cell Association}, 
  year={2025},
  volume={},
  number={},
  pages={},
  note={\url{DOI:10.1109/TWC.2025.3555114}}
}

@article{gupta2018distributed,
  title={Distributed learning of deep neural network over multiple agents},
  author={Gupta, Otkrist and Raskar, Ramesh},
  journal={J. Netw. Comput. Appl.},
  volume={116},
  pages={1--8},
month={Aug.},
  year={2018},
  publisher={Elsevier}
}

@article{ninkovic2024comsplit,
  title={{COMSPLIT}: A Communication--Aware Split Learning Design for Heterogeneous {IoT} Platforms},
  author={Ninkovic, Vukan and Vukobratovic, Dejan and Miskovic, Dragisa and Zennaro, Marco},
  journal={IEEE Internet Things J.},
  year={2024},
month={Mar.},
  pages={5305--5319},
  volume={12},
  number={5},
  publisher={IEEE}
}

@article{dachille2024impact,
  title={The Impact of Cut Layer Selection in Split Federated Learning},
  author={Dachille, Justin and Huang, Chao and Liu, Xin},
  journal={arXiv:2412.15536},
  year={2024}
}

@article{lee2024game,
  title={Game-Theoretic Joint Incentive and Cut Layer Selection Mechanism in Split Federated Learning},
  author={Lee, Joohyung and Cho, Jungchan and Lee, Wonjun and Seif, Mohamed and Poor, H Vincent},
  journal={arXiv:2412.07813},
  year={2024}
}

@article{xu2023accelerating,
  title={Accelerating Split Federated Learning over Wireless Communication Networks},
  author={Xu, Ce and Li, Jinxuan and Liu, Yuan and Ling, Yushi and Wen, Miaowen},
  journal={IEEE Trans. Wireless Commun.},
  year={2024},
  volume={23},
month={Jun.},
  number={6},
  pages={5587--5599},
  publisher={IEEE}
}

@ARTICLE{wang2022hivemind,
  author={Wang, Song and Zhang, Xinyu and Uchiyama, Hiromasa and Matsuda, Hiroki},
  journal={IEEE J. Sel. Areas Commun.}, 
  title={{HiveMind}: Towards Cellular Native Machine Learning Model Splitting}, 
  year={2022},
  volume={40},
month={Feb.},
  number={2},
  pages={626--640},
}

@article{kang2017neurosurgeon,
  title={Neurosurgeon: Collaborative intelligence between the cloud and mobile edge},
  author={Kang, Yiping and Hauswald, Johann and Gao, Cao and Rovinski, Austin and Mudge, Trevor and Mars, Jason and Tang, Lingjia},
  journal={ACM SIGARCH Comput. Archit. News},
  volume={45},
  number={1},
  pages={615--629},
  year = {2017},
month={Apr.},
}

@ARTICLE{lin2024efficient,
  author={Lin, Zheng and Zhu, Guangyu and Deng, Yiqin and Chen, Xianhao and Gao, Yue and Huang, Kaibin and Fang, Yuguang},
  journal={IEEE Trans. Mobile Comput.}, 
  title={Efficient Parallel Split Learning over Resource-constrained Wireless Edge Networks}, 
  year={2024},
month={Oct.},
  volume={23},
  number={10},
  pages={9224-9239},
  doi={10.1109/TMC.2024.3359040}
}

@ARTICLE{wu2023split,
  author={Wu, Wen and Li, Mushu and Qu, Kaige and Zhou, Conghao and Shen, Xuemin and Zhuang, Weihua and Li, Xu and Shi, Weisen},
  journal={IEEE J. Sel. Areas Commun.}, 
  title={Split Learning Over Wireless Networks: Parallel Design and Resource Management}, 
  year={2023},
month={Apr.},
  volume={41},
  number={4},
  pages={1051--1066},
  doi={10.1109/JSAC.2023.3242704}}

@ARTICLE{hakim2024incident,
  author={Hakim, Mohammad Lutful and Islam, Mohammad Tariqul and Alam, Touhidul},
  journal={IEEE Antennas Wirel. Propag. Lett.}, 
  title={Incident Angle Stable Broadband Conformal {mm-Wave FSS} for {5G} (n257, n258, n260, and n261) Band {EMI} Shielding Application}, 
  year={2024},
month={Feb.},
  volume={23},
  number={2},
  pages={488--492},
}

@ARTICLE{wu2024pdd,
  author={Wu, Liantao and Gao, Guoliang and Yu, Jing and Zhou, Fangtong and Yang, Yang and Wang, Tengfei},
  journal={IEEE Internet Things J.}, 
  title={{PDD: Partitioning DAG-topology DNNs for streaming tasks}}, 
  year={2024},
month={Mar.},
  volume={11},
  number={6},
  pages={9258--9268},
}

@ARTICLE{wang2021network,
  author={Wang, Su and Ruan, Yichen and Tu, Yuwei and Wagle, Satyavrat and Brinton, Christopher G. and Joe-Wong, Carlee},
  journal=IEEE_J_N, 
  title={Network-Aware Optimization of Distributed Learning for Fog Computing}, 
  year={2021},
month = {Oct.},
  volume={29},
  number={5},
  pages={2019--2032},
}

@ARTICLE{lim2024cutting,
  author={Lim, Jeong-A and Lee, Joohyun and Kwak, Jeongho and Kim, Yeongjin},
  journal={IEEE Trans. Serv. Comput.}, 
  title={Cutting-Edge Inference: Dynamic DNN Model Partitioning and Resource Scaling for Mobile AI}, 
  year={2024},
  volume={17},
  number={6},
  pages={3300--3316},
}

@inproceedings{szegedy2017inception,
  title={{Inception-v4, Inception-ResNet and the impact of residual connections on learning}},
  author={Szegedy, Christian and Ioffe, Sergey and Vanhoucke, Vincent and Alemi, Alexander},
  booktitle={Proc. AAAI},
  year={2017},
  month={Feb.},
  pages={4278--4284},
  address ={Palo Alto, USA},
}

@inproceedings{duan2021joint,
  title={Joint optimization of {DNN} partition and scheduling for mobile cloud computing},
  author={Duan, Yubin and Wu, Jie},
  booktitle={Proc. ACM ICPP},
month={Oct.},
  pages={1--10},
address = {New York, NY, USA},
  year={2021}
}

@inproceedings{szegedy2015going,
  title={Going deeper with convolutions},
  author={Szegedy, Christian and Liu, Wei and Jia, Yangqing and Sermanet, Pierre and Reed, Scott and Anguelov, Dragomir and Erhan, Dumitru and Vanhoucke, Vincent and Rabinovich, Andrew},
  booktitle={Proc. IEEE CVPR},
  month={Jun.},
  pages={1--9},
  year={2015},
  address = {Boston, Massachusetts, USA}
}

@inproceedings{he2016deep,
  title={Deep residual learning for image recognition},
  author={He, Kaiming and Zhang, Xiangyu and Ren, Shaoqing and Sun, Jian},
  booktitle={Proc. IEEE CVPR},
  pages={770--778},
  month={Jun.},
  year={2016},
 address = {Las Vegas, Nevada, USA}
}

@inproceedings{huang2017densely,
  title={Densely connected convolutional networks},
  author={Huang, Gao and Liu, Zhuang and Van Der Maaten, Laurens and Weinberger, Kilian Q},
  booktitle={Proc. IEEE CVPR},
  pages={4700--4708},
 month={Jul.},
  year={2017},
 address = {Honolulu, Hawaii, USA}
}

@Inbook{Dinitz2006,
author="Dinitz, Yefim",
editor="Goldreich, Oded
and Rosenberg, Arnold L.
and Selman, Alan L.",
title="Dinitz' Algorithm: The Original Version and Even's Version",
bookTitle="Theoretical Computer Science: Essays in Memory of Shimon Even",
year="2006",
publisher="Springer Berlin Heidelberg",
address="Berlin, Heidelberg",
pages="218--240",
url="https://doi.org/10.1007/11685654_10"
}

@misc{3gpp2021study,
  author= {{3GPP}},
  title = {{5G System (5GS); Study on traffic characteristics and performance requirements for AI/ML model transfer}},
  howpublished = {document TR 22.874 V18.2.0},
  year= {2021},
}

@misc{3gpp2024user,
  author= {{3GPP}},
  title = {{NR; User equipment (UE) radio transmission and reception}},
  howpublished = {document TS 38.101-1 V18.6.0},
  year= {2024},
}

@misc{3gpp2022phsical,
  author= {{3GPP}},
  title= {{NR; Physical layer procedures for data}},
  howpublished = {document TS 38.214 V17.3.0},
  year= {2022},
}
\end{document}